\documentclass{article}

\usepackage[preprint]{neurips_2026}

\usepackage[utf8]{inputenc} 
\usepackage[T1]{fontenc}    
\usepackage{hyperref}       

\usepackage{url}            
\usepackage{booktabs}       
\usepackage{amsfonts}       
\usepackage{nicefrac}       
\usepackage{microtype}      
\usepackage{xcolor}         
\definecolor{revisiongreen}{RGB}{0,0,0}

\usepackage{microtype}
\usepackage{graphicx}
\usepackage{subcaption}

\usepackage{amsmath}
\usepackage{amssymb}
\usepackage{mathtools}
\usepackage{amsthm}
\usepackage{comment}
\usepackage{algorithm}
\usepackage{algpseudocode}

\usepackage{float}

\theoremstyle{plain}
\newtheorem{theorem}{Theorem}[section]
\newtheorem{proposition}[theorem]{Proposition}

\theoremstyle{definition}

\theoremstyle{remark}

\usepackage{enumitem}
\usepackage{tcolorbox}

\newcommand{\R}{\mathbb R}

\newcommand{\ip}[2]{\langle #1,#2\rangle_F}
\newcommand{\norm}[1]{\lVert #1\rVert_F}

\newcommand{\tr}{\operatorname{tr}}

\newcommand{\ourname}{MuonGS}
 
\title{MuonGS: Pretraining LLMs with Relaxed Weight-Norm Control}

\author{
  Athanasios Glentis\thanks{Equal contribution, alphabetical order.}
  \qquad
  Dawei Li\footnotemark[1]
  \qquad
  Mingyi Hong\\ \\
  University of Minnesota\\
  \texttt{\{glent007,li004678,mhong\}@umn.edu}
}

\begin{document}

\maketitle

\begin{abstract}
\begingroup
Weight decay and explicit weight-norm constraints improve optimizer performance in language model pretraining, yet a unified understanding of why such control helps and how much is needed remains incomplete. We analyze norm control in Muon over the entire training trajectory using the \emph{accumulated effective learning rate}, $C_T=\sum_{t<T}\eta_t/\norm{W_t}$, where $\eta_t$ is the nominal learning rate and $\eta_t/\norm{W_t}$ is the effective learning rate relative to the weight norm. For bounded update directions, accumulation over any training interval bounds relative weight displacement. We show that vanilla Muon can accumulate only logarithmically, potentially limiting exploration and helping explain its slower empirical progress, whereas MuonW and MuonH guarantee at least linear accumulation under the stated conditions. Going beyond bounded-norm control, we show that sufficiently slow norm growth guarantees faster accumulation than the logarithmic worst case, opening up less restrictive optimizer designs, with more relaxed weight-norm control.

Building on this connection between norm growth and exploration, we propose MuonGS, which uses Gram--Schmidt (GS) projection to remove the radial component of each Muon update without fixing the weight norm. With bounded update directions and a constant nominal learning rate, we prove that the weight norm grows at most proportionally to $\sqrt{t}$ after $t$ steps. Consequently, MuonGS guarantees $C_T=\Omega(\sqrt{T})$, with accumulation sustained even in later training. Further, for smooth scale-invariant objectives we establish its $O(T^{-1/2})$ stationarity guarantee. In pretraining experiments with 160M- and 400M-parameter dense models as well as a 135M active- and 1B total- parameter MoE model, MuonGS achieves lower validation perplexity than MuonH at long training horizons. In particular, for the dense model it achieves an estimated token-equivalent speedup up to about $15\%$ at matched validation loss to MuonH.

\endgroup
\end{abstract}

\section{Introduction}

Optimizer design plays an important role in large language model (LLM) pretraining, shaping learning dynamics and model quality~\mbox{\citep{liu2025muon,wen2026fantastic}}. Understanding the principles behind successful optimizers is therefore essential for improving training.

Weight-norm control is a common component of effective optimizers for LLM pretraining. AdamW and the weight-decay variant of Muon, MuonW, counter norm growth through decoupled shrinkage~\mbox{\citep{loshchilov2017decoupled,liu2025muon}}. Its strength must be tuned, and the resulting weight scale depends on the optimization dynamics. Explicit constraints provide more direct control, as in fixed-radius optimization~\mbox{\citep{kodryan2022training}}, unit-norm weight vectors in nGPT~\mbox{\citep{loshchilov2024ngpt}}, and fixed weight and update norms in MuonH, the Hyperball variant of Muon~\mbox{\citep{wen2026fantastic}}. These constraints prescribe weight scales throughout training and leave less freedom for their evolution.

These different norm controls raise a common question: \emph{When is norm control needed, why does it help, and how much is sufficient?} Existing analyses explain important effects of weight norms on optimization~\mbox{\citep{zhang2018three,dangelo2024why,liu2026effective}}, but a unified basis for choosing among these controls remains incomplete. In particular, are the restrictions imposed by existing norm-control methods essential, or can weaker control deliver better optimization performance?

A common rationale for controlling weight norms is to prevent updates from becoming too small relative to the current weights. Following prior work~\mbox{\citep{dangelo2024why,liu2026effective}}, the \emph{effective learning rate} is the ratio $\eta_t/\norm{W_t}$ of the nominal learning rate $\eta_t$ to the Frobenius norm of the current weight matrix $W_t$. It determines the step size relative to the weight norm, up to the norm of the update direction. For a Muon update $W_{t+1}=W_t-\eta_t O_t$,
\[
\frac{\norm{W_{t+1}-W_t}}{\norm{W_t}}
=\frac{\eta_t}{\norm{W_t}}\norm{O_t}.
\]
Keeping the weight norm below $R$ ensures $\eta_t/\norm{W_t}\ge\eta_t/R$. When update norms are fixed, this protects relative movement in each step from shrinking due to norm growth. But must norm control preserve relative movement at every step, or is it sufficient to preserve exploration capacity over the training trajectory as a whole?

To examine this question, we begin with the measure of {\it net displacement} $\norm{W_T-W_0}$. Clearly, the same displacement produces a smaller relative change in the parameters when their norm is larger. We therefore normalize by the average weight norm $\bar r_T=T^{-1}\sum_{t<T}\norm{W_t}$, which summarizes the typical scale along the trajectory. This ratio measures net movement relative to that scale and is unchanged by a common rescaling of all iterates. For nonzero iterates, a constant nominal learning rate $\eta$, and $\norm{O_t}\le M$,
\begin{align}
\frac{\norm{W_T-W_0}}{\bar r_T}
\le\frac{M\eta T}{\bar r_T}
\le M\sum_{t<T}\frac{\eta}{\norm{W_t}}.
\end{align}
The first bound sums update lengths; the second uses the fact that the average inverse norm is at least the inverse average norm. Clearly, {\it sufficient} accumulation of effective learning rates is therefore a \emph{necessary condition} for substantial movement.

{\bf Our contribution.} We develop a {\it global} perspective on norm control for Muon-type algorithms by examining exploration over entire training trajectories. This perspective identifies less restrictive norm-control conditions and guides the design of \emph{MuonGS}, a new optimizer with theoretical guarantees and strong empirical performance. Central to our analysis is the \emph{accumulated effective learning rate}, defined for general learning-rate schedules by
\[
C_T=\sum_{t=0}^{T-1}\frac{\eta_t}{\norm{W_t}}.
\]
Our main claim is that continued exploration requires sufficient growth of $C_T$ over the training intervals of interest. Further, we extend the displacement argument to arbitrary intervals and learning-rate schedules, that for bounded update directions, net weight displacement from iteration $s$ to $T$ , relative to the larger endpoint norm, is at most $M(C_T-C_s)$. A large $C_T$ may come mostly from early iterations. Continued exploration after iteration $s$ requires sufficient additional accumulation, $C_T-C_s$, and if this increment is small, relative weight displacement over $[s,T]$ must also be small.

We next compare how different norm-control mechanisms affect this accumulation. When the learning rate $\eta_t=\eta$ is held constant throughout training, vanilla Muon can exhibit linear norm growth and only logarithmic accumulation, $C_T=\Theta(\log T)$. MuonW guarantees $C_T=\Omega(T)$ under the stated conditions, while fixed-norm control gives $C_T=\Theta(T)$. These results show that weight decay and fixed-norm constraints guarantee at least linear accumulation, compared with vanilla Muon's logarithmic worst case.
A direct consequence of our analysis is that bounded weight norms are {\it sufficient} for these linear accumulation guarantees, but stronger than needed to improve on vanilla Muon's logarithmic worst case. We show that this requirement can be relaxed to sufficiently slow norm growth, which still guarantees faster accumulation (as compared to vanilla Muon) even as weight norms increase and effective learning rates decrease. This observation opens up a broader family of Muon variants that control norm growth without requiring bounded weight norms. Can this additional flexibility be used to design optimizers that sustain accumulation throughout training and achieve better practical performance?

To investigate this question, we propose \emph{MuonGS}, which uses Gram--Schmidt (GS) projection to remove the radial component of Muon's update. The resulting update is orthogonal to the current weights in the Frobenius inner product. This orthogonality eliminates the first-order increase in the weight norm, so the norm grows only at second order in the step size. With bounded directions and a constant nominal learning rate, this guarantees $O(\sqrt{t})$ norm growth and $C_T=\Omega(\sqrt{T})$, with the same accumulation order over later training windows. MuonGS thus improves vanilla Muon's worst-case accumulation guarantee without fixing a radius. For smooth scale-invariant objectives, we further established the convergence rate for MuonGS.

\begin{figure*}[t]
  \centering
  \includegraphics[width=1\textwidth]{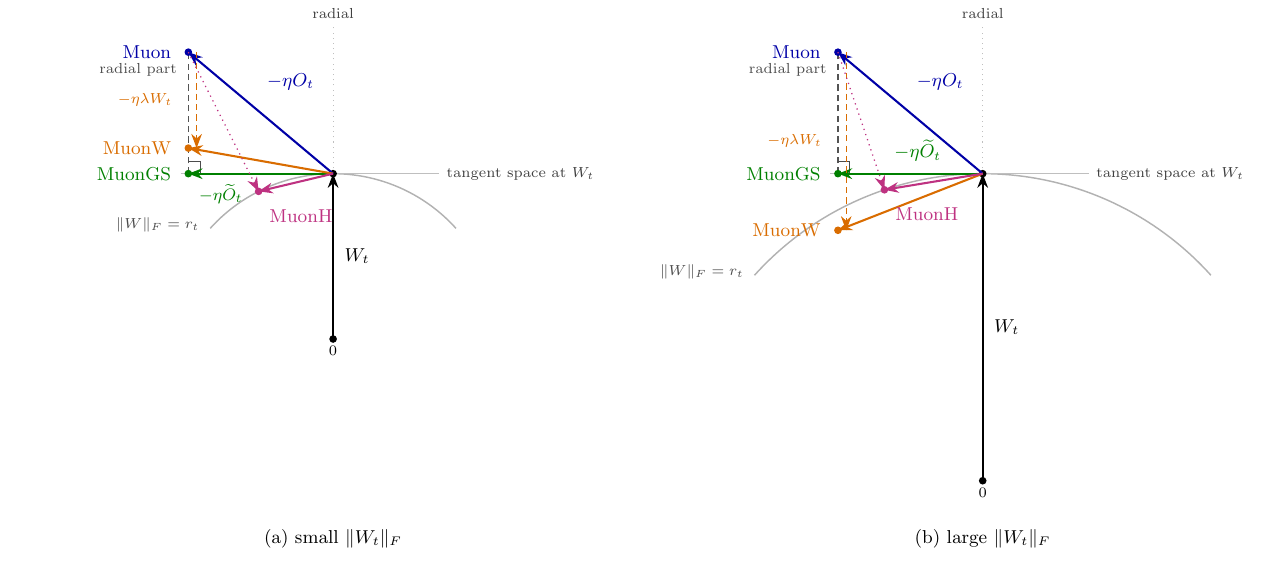}
  \caption{One step of vanilla Muon, MuonW, MuonH and MuonGS from the same $W_t$ and the same direction $O_t$, for a small (left) and a large (right) weight norm. Vanilla Muon’s radial part (dashed) grows the norm at first order; MuonH retracts to $\|W\|_F=R$ (dotted); MuonGS keeps the tangential part only, so the norm changes at second order. MuonW’s decay $-\eta\lambda W_t$ scales with $\|W_t\|_F$ but the actual step depends on the size of $W_t$; the decay therefore hardly slows down the norm growth for a small norm but can shrink the norm for a large one.}
  \label{fig:alg_compare}
\end{figure*}

We evaluate MuonGS on FineWeb pretraining with modernized GPT-style models of approximately 160M and 400M parameters. At longer training horizons, MuonGS achieves lower validation perplexity than MuonH. Fitting MuonH's loss as a function of the training-token budget gives an estimated token-equivalent speedup of up to about $15\%$ at matched validation loss. In addition, MuonGS obtains lower validation perplexity than MuonH when pretraining a modern MoE model, having about 135M active and 1B total parameters, on Ultra-FineWeb. These results support MuonGS as a practical optimizer and demonstrate the value of relaxing norm-control requirements. Together with our theoretical guarantees, these findings provide a positive answer to the question above by showing that weaker norm control can indeed sustain accumulation throughout training while improving empirical performance.

Our contributions are threefold.
\begin{itemize}[leftmargin=*]
\item We establish a global perspective on norm control in Muon, in the sense that we assess exploration over entire training trajectories through the growth of $C_T$ and its increments over training intervals. Our analysis bounds relative weight displacement and, under scale-invariance and Lipschitz assumptions, attainable loss change. At a constant nominal learning rate, it distinguishes vanilla Muon's logarithmic worst case from the linear guarantees of MuonW and fixed-norm control. It also explains when norm growth can offset a larger nominal learning rate.
\item We propose MuonGS and prove $O(\sqrt{t})$ weight-norm growth, $C_T=\Omega(\sqrt{T})$, and $C_{2T}-C_T=\Omega(\sqrt{T})$ for bounded directions and a constant nominal learning rate. These guarantees improvement on vanilla Muon's logarithmic worst-case accumulation without requiring bounded weight norms. The bound on $C_{2T}-C_T$ further shows that accumulation is sustained over later training windows, preserving a necessary condition for continued exploration. 
\item We demonstrate MuonGS's effectiveness in dense transformer pretraining at 160M and 400M parameter scale, with lower validation perplexity than MuonH at longer horizons and an estimated token-equivalent speedup of up to about $15\%$ at matched validation loss. In addition, we show MuonGS obtains lower validation perplexity than MuonH when pretraining a modern MoE model having about 135M active and 1B total parameters.
\end{itemize}

\section{Related Works}

\textbf{Weight Norm Control.} Weight-norm control approaches have been commonly used in neural network training through reparameterization, shrinkage or constraints. Early works like weight normalization controlled weight geometry \citep{salimans2016weight}. AdamW then decoupled weight decay from adaptive gradient preconditioning \citep{loshchilov2017decoupled}, while AdamP and SGDP \citep{heo2020adamp} subsequently slowed norm growth by removing the radial component of the update for weights they detect as scale-invariant. Later methods introduced norm targets and per-matrix upper bounds \citep{loshchilov2023weight, franke2024improving}, followed by explicit constraints for LLM pretraining \citep{loshchilov2024ngpt, wen2026fantastic, an2026demystifying, hagele2026improving, xie2026controlled}. 
However, these methods do not describe how fast the norm grows, nor do they analyze what impact the increased weight norm has on the overall algorithm performance.

\textbf{Effective Learning Rate.} The effective learning rate describes how parameter scale modifies the effect of a nominal learning rate. In scale-invariant settings, \citet{van2017l2} derived effective learning rates proportional to $\eta/\|W\|^2$ for SGD and $\eta/\|W\|$ for RMSProp and Adam under their respective scaling assumptions. Later works related weight norms and decay to effective learning-rate schedules \citep{hoffer2018norm, zhang2018three}, set up equivalences between weight decay and learning-rate schedules \citep{li2019exponential}, and studied related concepts such as spherical equilibrium, periodic behavior, and optimizer-dependent angular dynamics \citep{wan2021spherical,lobacheva2021periodic,roburin2022spherical, kosson2023rotational}. In LLM pretraining, the effective learning rate also helps explain why weight decay improves training \citep{dangelo2024why} and why SGD underperforms Adam \citep{glentis2026revisiting}. More recently, \citet{liu2026effective} observed that effective learning rate determines the loss dynamics, while \citet{amin2026does} established stability results for effective learning-rate schedules. However, they do not explore how much the effective learning rate can accumulate over training, or relate this accumulation to exploration capacity. 

\section{Norm Control Via Accumulated Effective Learning Rate}

We begin with formalizing the global perspective on norm control by relating Muon's exploration over training intervals to the accumulated effective learning rate $C_T$. Section~\ref{sec:setting} starts from what exploration means for a weight matrix of an LLM, derives the effective learning rate as the quantity that sets the size of a Muon step relative to the weights, and shows that the sum of effective learning rates over a training interval bounds how far training can explore in that interval. Section~\ref{sec:control} compares how fast this sum grows under the three forms in which Muon is run in practice, without norm control, with decoupled weight decay, and with a fixed norm. Then it further answers whether a larger nominal learning rate can substitute for norm control. Proofs are collected in Appendix~\ref{app:proof}.

\subsection{Exploration and accumulated effective learning rate}\label{sec:setting}

Consider a weight matrix $W_t\in\mathbb{R}^{m\times n}$ of a transformer block updated by Muon \citep{jordan2024muon}, and denote $p=\min\{m, n\}$. Vanilla Muon updates it as
\begin{equation}\label{eq:update}
W_{t+1}=W_t-\eta_tO_t,\qquad O_t=\mathrm{Orthogonalize}(U_{t+1}),
\end{equation}
where $\eta_t\ge 0$ is the nominal learning rate and $U_{t+1}$ is the gradient momentum. Exact orthogonalization replaces each nonzero singular value of $U_{t+1}$ by one, giving
\begin{equation}
 \|O_t\|_F=\sqrt{\operatorname{rank}(U_{t+1})}\le\sqrt p.
 \label{eq:muon-bound}
\end{equation}
Thus a full-rank Muon direction has norm $\sqrt p$, independent of the
gradient magnitude\footnote{Practical implementations use Newton-Schulz iterations \citep{jordan2024muon} and multiply the update by a constant that depends on the shape of the matrix \citep{liu2025muon}. These implementations yield a constant-scale change in the bound of $\|O_t\|_F$ that can absorbed into $\eta_t$ or $\sqrt{p}$. We further note that all theoretical results below bound through $\|O_t\|_F$, so we can use a constant $M$ to bound $\|O_t\|_F$ for general cases.}. 

Write $r_t=\|W_t\|_F$ for the weight norm and assume $r_t>0$ throughout training. We study exploration through the net movement of a weight matrix relative to its scale. Over an interval $[s,T]$, the quantity
\[
    \frac{\|W_T-W_s\|_F}{r_T}
\]
measures how far the weights have moved relative to their final magnitude. 

To connect this net movement to Muon's updates, first compare each step
with the weights on which it acts:
\begin{equation}
 \frac{\|W_{t+1}-W_t\|_F}{r_t}
 =\sqrt{\operatorname{rank}(U_{t+1})}\,\frac{\eta_t}{r_t}
 \le\sqrt p\,\frac{\eta_t}{r_t}.
 \label{eq:relative-step}
\end{equation}
Muon supplies the dimension-dependent update scale, while $\eta_t/r_t$
sets its size relative to the weights. We therefore use $\eta_t/r_t$ as
the effective learning rate relative to the weight norm
\citep{dangelo2024why,liu2026effective}. Norm growth can reduce this rate
even when the nominal learning rate is unchanged.

Over a training interval $[s, T]$, these relative updates accumulate:
\begin{equation}
    \sum_{t=s}^{T-1}
    \frac{\|W_{t+1}-W_t\|_F}{r_t}
    \le
    \sqrt{p}\sum_{t=s}^{T-1}\frac{\eta_t}{r_t}.
    \label{eq:relativepath}
\end{equation}
This leads to the accumulated effective learning rate
\begin{equation}
    C_0=0,
    \qquad
    C_T=\sum_{t=0}^{T-1}\frac{\eta_t}{r_t}.
    \label{eq:CT}
\end{equation}
The sum $C_T$ captures how much relative updating can accumulate along the trajectory. The following proposition connects this quantity to the net displacement that motivated the analysis.

\begin{proposition}[Necessary accumulation for relative displacement]
\label{prop:necessary}
For the updates above and integers $0\le s<T$,
\begin{equation}
    \frac{\|W_T-W_s\|_F}{\max\{r_s,r_T\}}
    \le \sqrt{p}(C_T-C_s).
    \label{eq:displacement}
\end{equation}
In addition, define $F(W)=f(W/\|W\|_F)$ and suppose $f$ is
$L$-Lipschitz on the unit Frobenius sphere with respect
to Frobenius distance. Then
\begin{equation}
    |F(W_T)-F(W_s)|
    \le 2L\sqrt{p}(C_T-C_s).
    \label{eq:loss_change}
\end{equation}
\end{proposition}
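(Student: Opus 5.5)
The plan is to prove \eqref{eq:displacement} by induction on $T$ with $s$ fixed, and then derive \eqref{eq:loss_change} from it using an elementary inequality for normalized vectors.

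\textbf{Displacement bound.} Set $D_T=\|W_T-W_s\|_F$ and $\rho_T=\max\{r_s,r_T\}$. The claim is $D_T\le\sqrt p\,(C_T-C_s)\,\rho_T$, and it holds trivially at $T=s$. For the inductive step I use the triangle inequality together with \eqref{eq:muon-bound}:
\[
D_{T+1}\le D_T+\eta_T\|O_T\|_F\le \sqrt p\,(C_T-C_s)\,\rho_T+\sqrt p\,\frac{\eta_T}{r_T}\,r_T .
\]
If $r_{T+1}\ge r_T$, then $\rho_T\le\rho_{T+1}$ and $r_T\le\rho_{T+1}$. The right-hand side is then at most $\sqrt p\,(C_{T+1}-C_s)\rho_{T+1}$, which closes the induction.

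\textbf{Main obstacle.} The difficulty is a step on which the norm decreases, $r_{T+1}<r_T$. In that case an intermediate norm can exceed both endpoints, and the factor $\rho_T$ cannot simply be replaced by $\rho_{T+1}$.

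My first attempt at this case is to strengthen the inductive hypothesis to $D_T\le \sqrt p\,\sum_{t=s}^{T-1}\eta_t$, which holds unconditionally by the triangle inequality. I would then compare $\sum_t\eta_t$ with $\max\{r_s,r_T\}\sum_t\eta_t/r_t$ on the intervals where $r_t\le\rho_T$.

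On the remaining excursions, where $r_t>\rho_T$, I would use the per-step norm change $|r_{t+1}-r_t|\le\sqrt p\,\eta_t$. An excursion above the endpoint level that eventually returns must accumulate a comparable amount of $\eta_t$ on the way down. I would then try to charge those steps against their own $\eta_t/r_t$ contributions, using a polar decomposition of $W_T-W_s$ into an angular part and a radial part. The point of this charging is that displacement produced during a high-norm excursion is offset by the descent back to the endpoint level.

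If this charging argument does not close, the fallback is to state the bound with $\max_{s\le t\le T} r_t$ in the denominator. That version follows immediately from the one-line telescoping estimate \eqref{eq:relativepath}.

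\textbf{Loss bound.} For nonzero $a,b$ with $\|b\|_F\ge\|a\|_F$, inserting $a/\|b\|_F$ and applying the triangle inequality gives
\[
\Bigl\|\tfrac{a}{\|a\|_F}-\tfrac{b}{\|b\|_F}\Bigr\|_F\le \frac{\|b\|_F-\|a\|_F}{\|b\|_F}+\frac{\|a-b\|_F}{\|b\|_F}\le \frac{2\|a-b\|_F}{\max\{\|a\|_F,\|b\|_F\}}.
\]
Apply this with $a=W_s$ and $b=W_T$. Combining it with the $L$-Lipschitz property of $f$ on the unit sphere gives $|F(W_T)-F(W_s)|\le 2L\,\|W_T-W_s\|_F/\max\{r_s,r_T\}$. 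Then \eqref{eq:displacement} yields \eqref{eq:loss_change}.
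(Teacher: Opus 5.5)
\textbf{There is a genuine gap in the displacement bound.} Your induction hypothesis $D_T\le\sqrt p\,(C_T-C_s)\rho_T$ is the right one, and the easy cases close. If $r_{T+1}\ge r_T$, your argument works. The case $r_{T+1}<r_T\le r_s$ also works, since then $\rho_{T+1}=\rho_T=r_s\ge r_T$. The remaining case is $r_T>\max\{r_s,r_{T+1}\}$. This is exactly what makes the proposition nontrivial, and your proposal does not resolve it, for three reasons. First, the bound $D_T\le\sqrt p\sum_t\eta_t$ is not a strengthening of your hypothesis: it implies $D_T\le\sqrt p\,(C_T-C_s)\rho_T$ when $r_t\le\rho_T$ throughout, but not in general. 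Second, the excursion-charging argument is only sketched. No inequality is given that turns the descent steps into control of the displacement accumulated at high norm, and those steps contribute only $\eta_t/r_t<\eta_t/\rho_T$ to $C$. Third, the fallback with $\max_{s\le t\le T}r_t$ in the denominator is a strictly weaker statement. Your final step would not recover \eqref{eq:loss_change} from it, because that step needs the denominator $\max\{r_s,r_T\}$.

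The missing idea is a sharpened triangle inequality for the scale-normalized distance $d(X,Y)=\|X-Y\|_F/\max\{\|X\|_F,\|Y\|_F\}$. Take nonzero $X,Y,Z$ with $x=\|X\|_F\ge y=\|Y\|_F$ and $z=\|Z\|_F>x$. You need $\|X-Y\|_F/x\le\|X-Z\|_F/z+\|Z-Y\|_F/z$. The ordinary triangle inequality does not give this: dividing it by $x$ yields the same right-hand side with $z$ replaced by the smaller $x$. The paper obtains it from the inversion $\iota(X)=X/\|X\|_F^2$, which satisfies $\|\iota(X)-\iota(Y)\|_F=\|X-Y\|_F/(\|X\|_F\|Y\|_F)$:
\[
\frac{\|X-Y\|_F}{x}=y\|\iota(X)-\iota(Y)\|_F\le y\|\iota(X)-\iota(Z)\|_F+y\|\iota(Z)-\iota(Y)\|_F\le\frac{\|X-Z\|_F}{z}+\frac{\|Z-Y\|_F}{z},
\]
using $y\le x$ in the first term. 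Take $Z=W_T$ and $\{X,Y\}=\{W_s,W_{T+1}\}$. This gives $D_{T+1}/\rho_{T+1}\le D_T/\rho_T+\eta_T\|O_T\|_F/r_T$ in your hard case, which closes the induction. The paper equivalently chains $d$ along the iterates.

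Your normalization inequality for \eqref{eq:loss_change} is correct, once you label $a,b$ so that $\|b\|_F\ge\|a\|_F$. Apply it to each consecutive pair $(W_t,W_{t+1})$ and use the ordinary triangle inequality for the unit-norm points $W_t/r_t$. This gives $\|W_T/r_T-W_s/r_s\|_F\le2\sqrt p\,(C_T-C_s)$ directly, so the loss bound can be obtained without the hard case. The displacement bound \eqref{eq:displacement} itself still needs the inversion step.
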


The first bound \eqref{eq:displacement} implies that a weight change of at least a fraction $\delta>0$ of the larger endpoint norm requires $C_T-C_s\ge\delta/\sqrt{p}$ for any weight matrix. The second bound \eqref{eq:loss_change} further assumes a \emph{scale-invariant} loss, 
which means the loss depends only on its direction $W/\|W\|_F$ because its output is normalized. These scale-invariant matrices are common in query and key projections of general transformers. Bound \eqref{eq:loss_change} states that under the scale-invariance and Lipschitz assumptions with $L>0$, a loss decrease of at least $\Delta>0$ requires $C_T-C_s\ge\Delta/(2L\sqrt{p})$. Note that the Lipschitz assumption is mild as $L$ is the Lipschitz constant of the loss restricted to the unit sphere, which is finite for every continuously differentiable loss.
With the two bounds in Proposition \ref{prop:necessary}, small accumulation therefore limits both relative movement and, under these additional assumptions, loss reduction. We further note that Proposition \ref{prop:necessary} serves as a necessary condition that a lower-bound threshold is needed for sufficient exploration of the algorithm. It does not claim that larger exploration capacity can achieve lower loss.

Proposition \ref{prop:necessary} holds for every interval $[s, T]$, not only for the whole interval $[0, T]$. This is particularly relevant to later training because a large $C_T$ may come mainly from early steps. Substantial relative weight movement after step $s$ requires sufficient additional accumulation $C_T-C_s$, so continued exploration requires $C_T$ to keep growing. The next section therefore compares norm controls by two metrics: the relative growth  $C_T -C_1$ with respect to $T$, as well as the later stage relative growth $C_{2T}-C_T$ over the window $[T, 2T]$. Note that we use the relative growth of $C_T$ because it is not sensitive to initialization scale or model size, which is not the case if we only consider $C_T$ alone. See more discussion in Appendix~\ref{app:scale}. 

\subsection{Accumulation under three Muon norm controls}\label{sec:control}

Given the metric of $C_T$, we now compare how $C_T$ grows under three forms of norm control for Muon, that is, leaving the norm uncontrolled, applying decoupled weight decay, or constraining the norm explicitly. They correspond to vanilla Muon \citep{jordan2024muon}, MuonW \citep{liu2025muon}, and the norm constraint used by MuonH \citep{wen2026fantastic}, respectively. To isolate their effect, take a constant nominal learning rate $\eta_t=\eta>0$.

\begin{itemize}
\item \textbf{Vanilla Muon} follows~\eqref{eq:update}.
\item \textbf{MuonW} uses
\begin{equation}
 W_{t+1}=(1-\eta\lambda)W_t-\eta O_t,
 \qquad \lambda>0,\quad 0<\eta\lambda<1.
 \label{eq:wd}
\end{equation}
\item For \textbf{fixed-norm control}, let $r_0=R>0$ and
\begin{equation}
 W_{t+1}=R\frac{W_t-\eta D_t}{\|W_t-\eta D_t\|_F},
 \label{eq:fixed}
\end{equation}
where $D_t$ is the update direction before the weights are renormalized to norm $R$. Fixed norm uses $D_t=O_t$, while
\textbf{MuonH} additionally fixes the update 
norm by setting $D_t=R O_t/\|O_t\|_F$. The accumulation result below uses only the fixed weight norm, i.e., $D_t=O_t$. 
\end{itemize}

The next proposition characterizes the accumulated effective learning rate under these three cases.

\begin{proposition}[Accumulation under norm control]\label{prop:controls}
Assume a constant nominal learning rate $\eta_t=\eta>0$, and let $R_\lambda=\max\{r_0,\sqrt p/\lambda\}$. For every integer $T\ge1$:
\begin{enumerate}[label=(\roman*)]
\item Vanilla Muon satisfies
\[
C_T\ge\frac{1}{\sqrt p}\log\Bigl(1+\frac{\eta\sqrt p\,T}{r_0}\Bigr),\qquad C_{2T}-C_T\ge\frac{1}{\sqrt p}\log\frac{r_0+2\eta\sqrt p\,T}{r_0+\eta\sqrt p\,T}.
\]
\item MuonW satisfies
$C_T\ge \eta T/R_\lambda$ and $C_{2T}-C_T\ge \eta T/R_\lambda$.
\item A fixed norm $R$ gives
$C_T=\eta T/R$ and $C_{2T}-C_T=\eta T/R$.
\end{enumerate}
Further, for vanilla Muon, the lower bound expressed in (i) is tight as there exists a function and an initialization such that the worst-case rate is attainable, i.e., $C_T=\Theta(\log T)$ and
$C_{2T}-C_T\rightarrow\log 2/\sqrt p$.
\end{proposition}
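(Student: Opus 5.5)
The plan is to get an upper bound on $r_t$ in each of the three cases and then sum the reciprocals, since $C_T=\eta\sum_{t<T}1/r_t$.

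\textbf{Vanilla Muon.} By the triangle inequality and \eqref{eq:muon-bound}, $r_{t+1}\le r_t+\eta\|O_t\|_F\le r_t+\eta\sqrt p$. Hence $r_t\le r_0+\eta\sqrt p\,t$ and
\[
C_T\ge \sum_{t=0}^{T-1}\frac{\eta}{r_0+\eta\sqrt p\,t}.
\]
The function $x\mapsto \eta/(r_0+\eta\sqrt p\,x)$ is decreasing, so each term dominates the integral over $[t,t+1]$. This gives
\[
C_T\ge\int_0^T\frac{\eta\,dx}{r_0+\eta\sqrt p\,x}=\frac1{\sqrt p}\log\Bigl(1+\frac{\eta\sqrt pT}{r_0}\Bigr).
\]
Applying the same comparison to the terms $t=T,\dots,2T-1$ gives the bound on $C_{2T}-C_T$ as $\frac1{\sqrt p}\log\frac{r_0+2\eta\sqrt pT}{r_0+\eta\sqrt pT}$.

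\textbf{MuonW.} Apply the triangle inequality to \eqref{eq:wd}: $r_{t+1}\le(1-\eta\lambda)r_t+\eta\sqrt p$. By induction, $r_t\le R_\lambda$ for all $t$. The step is that if $r_t\le R_\lambda$, then
\[
r_{t+1}\le(1-\eta\lambda)R_\lambda+\eta\lambda(\sqrt p/\lambda)\le R_\lambda,
\]
because $\sqrt p/\lambda\le R_\lambda$. Every term of $C_T$ is therefore at least $\eta/R_\lambda$. This gives $C_T\ge\eta T/R_\lambda$, and the same bound holds for the $T$ terms in $[T,2T)$.

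\textbf{Fixed norm.} Here $r_t=R$ for every $t$, so the sums are exactly $\eta T/R$. One caveat: \eqref{eq:fixed} needs $W_t-\eta D_t\neq0$, which is implicitly assumed.

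\textbf{Tightness for vanilla Muon.} I would make the triangle inequality an equality at every step, so that $O_t$ is a positive multiple of $-W_t$ with full rank. A concrete instance is $m=n=p$ and $W_0=-(r_0/\sqrt p)I$. Take the loss $f(W)=\tr(W)$, whose gradient is $I$ at every point. Then the momentum is a positive multiple of $I$ (assuming zero or consistent initial momentum), so $O_t=I$. The update is $W_{t+1}=W_t-\eta I$, which stays a negative multiple of $I$, and its norm is $r_t=r_0+\eta\sqrt p\,t$ exactly. The bounds in (i) then hold with equality up to the integral comparison. A matching upper bound follows from the sum–integral comparison in the other direction: $C_T\le \eta/r_0+\frac1{\sqrt p}\log(1+\eta\sqrt p(T-1)/r_0)$, so $C_T=\Theta(\log T)$. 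Finally, $C_{2T}-C_T$ is a Riemann-type sum squeezed between two logarithms of ratios that both tend to $\log 2$, which gives the limit $\log2/\sqrt p$.

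The main obstacle is making this construction legitimate under the paper's Muon definition. One must check that the orthogonalization of the momentum is exactly $I$ at every step and that $r_t>0$ throughout, both of which hold since the iterates remain negative multiples of $I$. The remaining steps are routine integral comparisons.
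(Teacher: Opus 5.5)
Your proposal is correct and follows the paper's proof essentially step for step: the triangle-inequality bound $r_t\le r_0+\eta\sqrt p\,t$ with a sum--integral comparison for (i), the same induction $r_t\le R_\lambda$ for (ii), and $r_t=R$ for (iii). For tightness you use $f(W)=\tr W$ from $W_0=-(r_0/\sqrt p)I$, whereas the paper uses $F(W)=\exp(\tr W)$ from $W_0=-bI_p$. This difference is immaterial, since both make every gradient and momentum a positive multiple of $I$, giving $O_t=I$ and $r_t=r_0+\eta\sqrt p\,t$ exactly, and your two-sided integral comparison then matches the paper's.
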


Proposition \ref{prop:controls} shows why norm control matters for accumulation. Without it, the norm can grow linearly in $t$, so the effective learning rate can decay like $1/t$ and $C_T$ can grow as slowly as $\log T$. Over a later window $[T,2T]$, the bound in (i) tends to $(\log2)/\sqrt p$ as $T$ grows, so the accumulation of vanilla Muon over later windows can stay bounded. In comparison, weight decay and a fixed norm keep the norm at most $R_\lambda$ or $R$, so the effective learning rate stays at least $\eta/R_\lambda$ or $\eta/R$, and $C_T$ and $C_{2T}-C_T$ grow at least linearly. By Proposition~\ref{prop:necessary}, bounded accumulation over later windows limits how far the weights can move late in training and, under the scale-invariance and Lipschitz assumptions, how much the loss can still decrease. This offers an explanation for vanilla Muon's slower progress in training \citep{liu2025muon} and for the improvements obtained with weight decay or fixed-norm constraints.

Figure~\ref{fig:toy_accumulation} shows the three forms on the example (see Appendix~\ref{app:example}) that shows the bounds of Proposition~\ref{prop:controls}(i) are tight: $F(W)=\exp(\tr W)$ from $W_0=-4I_2$, where every Muon direction is $O_t=I_2$, so each update $-\eta O_t$ points away from the origin and the norm grows as fast as possible.
Vanilla Muon has $r_t=\sqrt{2}(4+\eta t)$ and $C_T=\Theta(\log T)$, whereas MuonW and fixed-norm control yield $C_T=\Theta(T)$. The distinction is clearer over later windows: $C_{2T}-C_T$ approaches $\log 2/\sqrt{2}$ for vanilla Muon but grows linearly with $T$ under the other two controls.

\begin{figure*}[t]
  \centering
  \includegraphics[width=\textwidth]{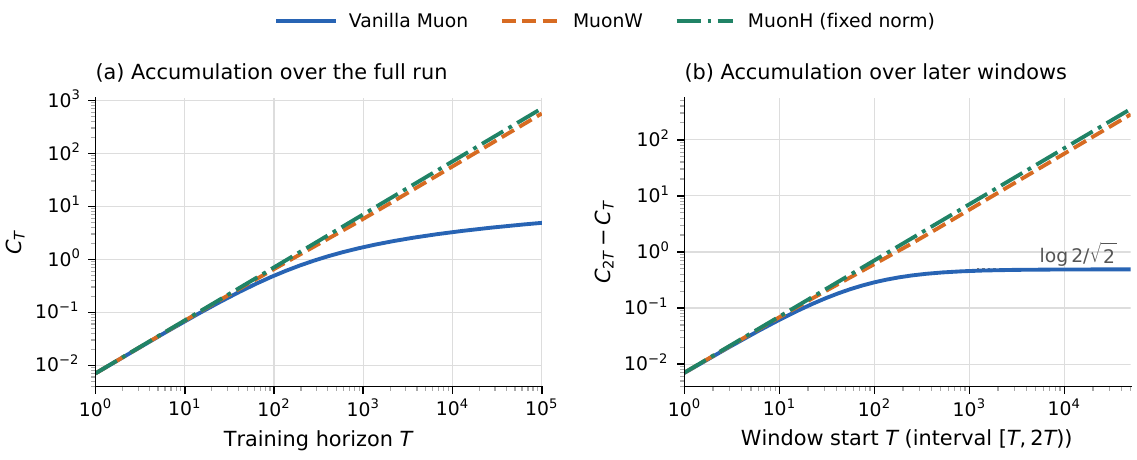}
  \caption{An attainable gap in accumulation growth. Exact Muon is run on $F(W)=\exp(\operatorname{tr}W)$ from $W_0=-4I_2$ for $10^5$ steps, with $\eta=0.04$ and momentum coefficient $\beta=0.95$. MuonW uses $\lambda=0.2$, and MuonH uses $R=4\sqrt2$. All methods have the same orthogonalized direction $I_2$, which MuonH rescales to $D_t=4I_2$. Left: total accumulation. Right: accumulation over $[T,2T)$; the dotted line is $\log 2/\sqrt2$.} 
  \label{fig:toy_accumulation}
\end{figure*}

The example in Proposition \ref{prop:controls} shows that the difference in worst-case growth orders can occur in an actual Muon run.
In practical LLM pretraining, we also observe such numerical differences in accumulation within the same training budget, though the gap is smaller. (See Figure~\ref{fig:CT_horizon}). 
 
\begin{figure*}[t]
  \centering
  \includegraphics[width=\textwidth]{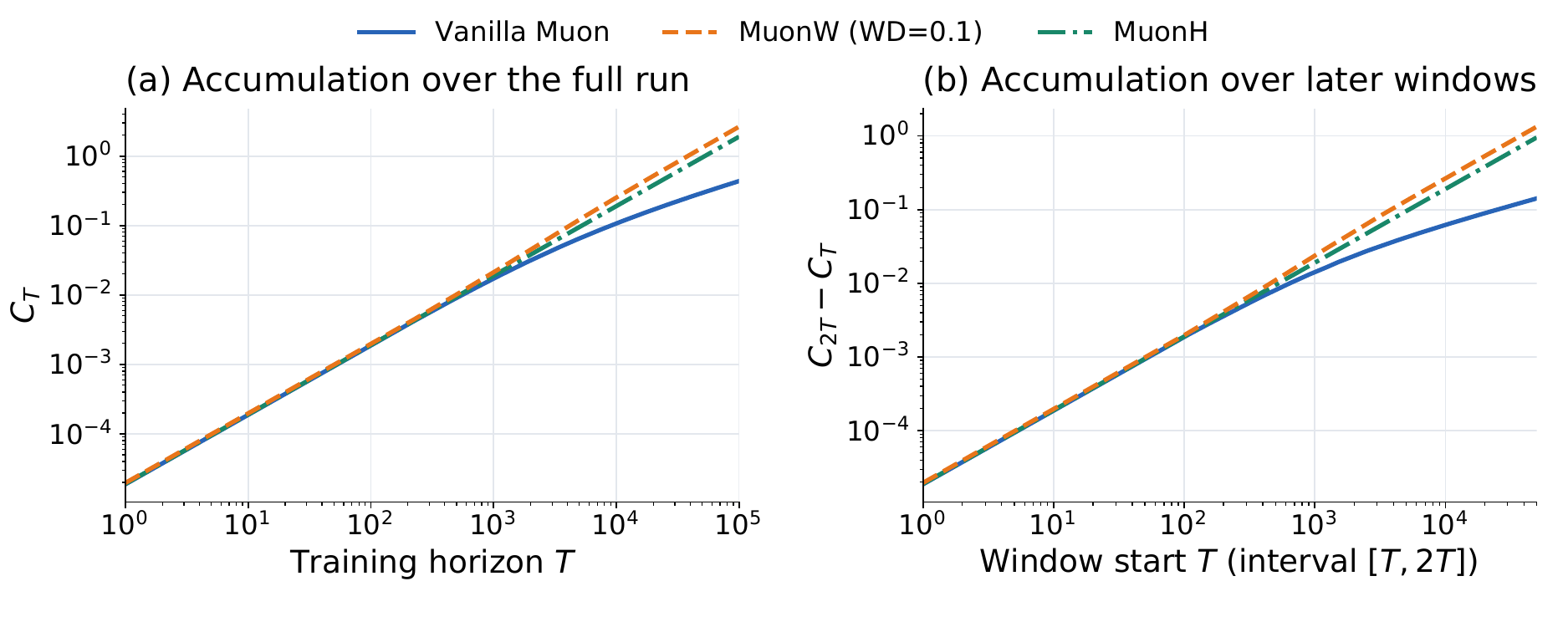}
  \caption{Accumulate effective learning rate $C_T$ vs training horizon $T$. Results from training a 160M GPT-style model on FineWeb for 100,000 steps (about 50B tokens) with a constant learning rate of $0.005$. Left: total accumulation $C_T$. Right: accumulation over $[T,2T)$.}
  \label{fig:CT_horizon}
\end{figure*}

The results so far fix the nominal learning rate. A natural question is whether a larger nominal learning rate can make up for the small accumulation of vanilla Muon. It does not, because a larger $\eta$ also makes the norm grow faster, roughly in proportion to $\eta$, so the effective learning rates after the first step barely increase (Appendix~\ref{app:lr} makes this precise). Figure~\ref{fig:CT_lr} shows this in GPT-160M pretraining: raising the learning rate 16-fold moves $C_{20{,}000}$ of vanilla Muon only from $0.165$ to $0.190$, whereas with weight decay $C_{20{,}000}$ grows with the learning rate. The best MuonW run for each weight decay (stars) reaches $C_{20{,}000}$ between $0.3$ and $0.4$, a level that vanilla Muon does not reach at any learning rate, and a lower perplexity ($21.98$--$22.02$,  versus $22.49$ for the best vanilla Muon run).

\begin{figure*}[t]
  \centering
  \includegraphics[width=1\textwidth]{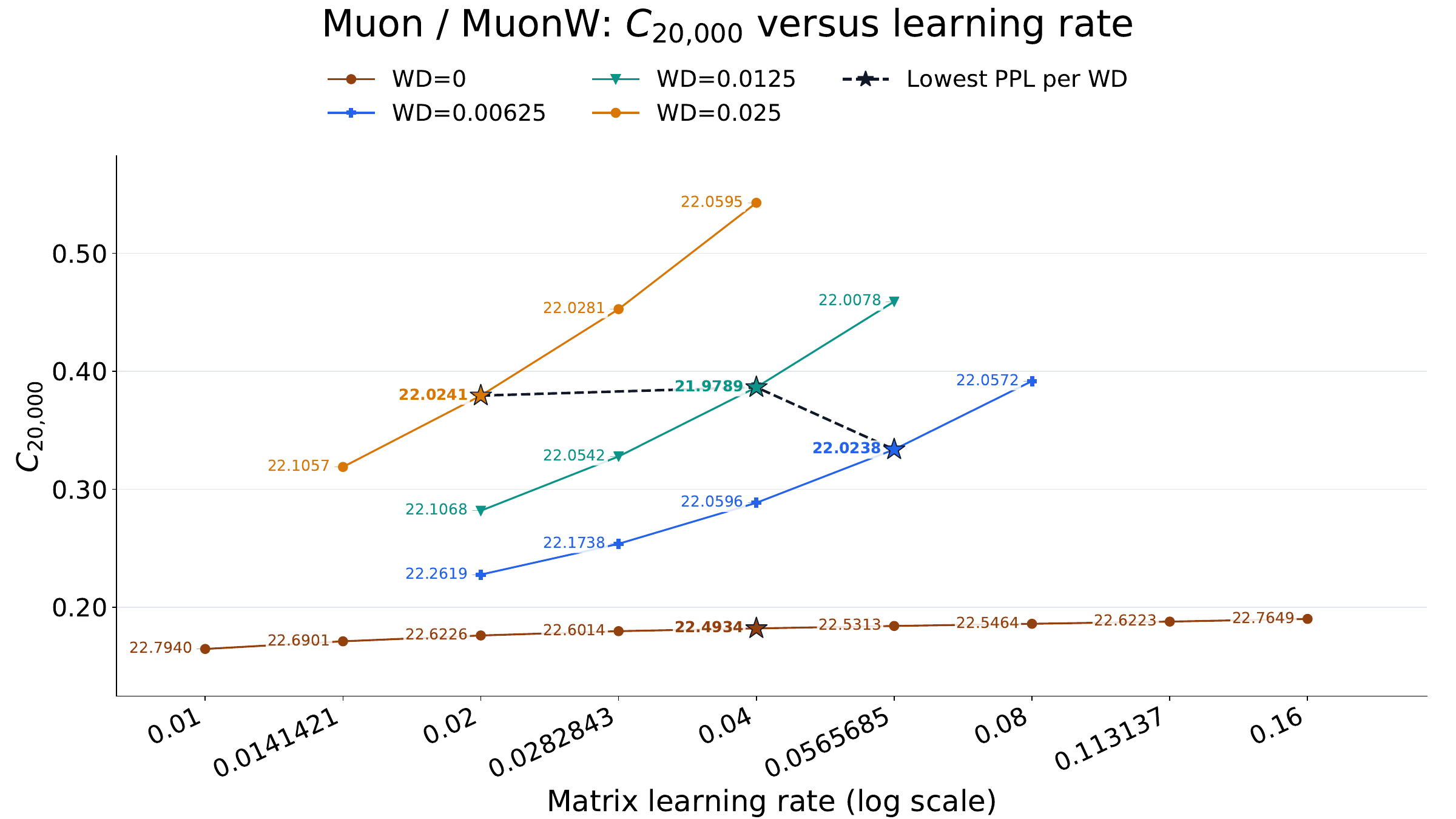}
  \caption{Accumulate effective learning rate $C_T$ vs learning rate scale $\eta$.  Results from training a 160M GPT-style model on FineWeb for 20,000 steps (about 10.5B tokens). Each point represents a learning-rate and weight-decay setting; solid curves connect settings with the same weight decay (WD). WD= 0 is vanilla Muon, and positive WD corresponds to MuonW. Numbers beside the points report perplexity. Stars mark the lowest perplexity among the tested learning rates for each WD; the dashed line connects these stars for MuonW.}
  \label{fig:CT_lr}
\end{figure*}

To summarize, our analysis links norm control to exploration through accumulated effective learning rates. Weight decay and fixed-norm constraints sustain accumulation, while rapid norm growth can limit it and offset a larger nominal learning rate.

\section{MuonGS for LLM Pretraining}\label{sec:gs}

The preceding analysis links the benefit of bounded weight norms to the accumulation they preserve. This suggests relaxing the bounded-norm requirement, since sufficiently slow norm growth can still guarantee faster accumulation than vanilla Muon's logarithmic worst case. For example, at a constant nominal learning rate $\eta>0$, the bound $r_t\le r_0+ct^a$ with $c>0$ and $0\le a<1$ gives
\[
 C_{2T}-C_T\ge\frac{\eta T}{r_0+c(2T)^a}=\Omega(T^{1-a}).
\]
Accumulation can therefore grow over later training windows even when weight norms increase and effective learning rates decrease. This opens up a broader family of Muon variants and brings us back to the question posed in the introduction. \emph{Can this additional flexibility be used to design optimizers that sustain accumulation throughout training and achieve better practical performance?}

We address this question with \emph{MuonGS}, which controls norm growth by removing the radial component of Muon's update. We analyze its accumulation and optimization properties, then evaluate its performance in LLM pretraining.

\subsection{Removing the radial component of the Muon update}
\label{sec:gsupdate}

To construct an update that limits weight-norm growth, we first examine how a vanilla Muon step changes the squared weight norm.
\begin{equation}
 \|W_t-\eta_tO_t\|_F^2
 =r_t^2-2\eta_t\langle W_t,O_t\rangle_F
   +\eta_t^2\|O_t\|_F^2.
 \label{eq:rawnorm}
\end{equation}
The term $-2\eta_t\langle W_t,O_t\rangle_F$ is linear in the step size and increases the squared weight norm when $\langle W_t,O_t\rangle_F<0$. Muon controls the size of $O_t$ through matrix orthogonalization, but the update may still point partly along $W_t$. This radial component is responsible for the first-order norm change.

\begingroup
We therefore seek a modified direction that eliminates the first-order norm change while remaining as close as possible to Muon's proposed update. Giving priority to the norm-control requirement leads to the following simple bilevel formulation for fixed $W_t\ne0$ and $O_t$
\[
 \begin{aligned}
 \min_{D\in\R^{m\times n}}\quad &\frac12\norm{D-O_t}^2\\
 \text{subject to}\quad
 &D\in\operatorname*{arg\,min}_{Z\in\R^{m\times n}}
       \frac12\ip{W_t}{Z}^{\,2}.
 \end{aligned}
\]
The lower level minimizes the squared first-order norm change, so its minimizers are precisely the directions orthogonal to $W_t$. The upper level selects the closest such direction to $O_t$. The unique solution is the Gram--Schmidt (GS) projection onto the tangent space of the sphere through $W_t$. We call the resulting method \emph{MuonGS}, with the update
\endgroup
\begin{equation}
 \widetilde O_t=O_t-
 \frac{\langle W_t,O_t\rangle_F}{r_t^2}W_t,
 \qquad W_{t+1}=W_t-\eta_t\widetilde O_t.
 \label{eq:gs}
\end{equation}
The projection gives the solution in closed form, so no additional iterative optimization is required. Since $O_t$ is treated as given, it also applies when the Muon direction incorporates momentum and approximate matrix orthogonalization.

The projection gives $\langle W_t,\widetilde O_t\rangle_F=0$ and $\|\widetilde O_t\|_F\le \sqrt{p}$. Substituting into~\eqref{eq:rawnorm} leaves
\begin{equation}
 r_{t+1}^2=r_t^2+\eta_t^2\|\widetilde O_t\|_F^2.
 \label{eq:gsidentity}
\end{equation}
Thus the norm may increase, but its squared increment is only quadratic in the step size. AdamP applies the same tangent-space projection to the Adam update, removing its radial component to suppress momentum-induced norm growth on weights it detects as scale-invariant, i.e., when the cosine similarity between the gradient and the weight falls below a small threshold ~\mbox{\citep{heo2020adamp}}. MuonGS applies this projection directly to Muon's matrix updates, without a scale-invariance test. More importantly, beyond this algorithmic distinction, our analysis links the resulting norm-growth control to exploration over entire training trajectories. As shown next, for bounded update directions and a constant nominal learning rate, MuonGS guarantees $r_t=O(\sqrt{t})$, $C_T=\Omega(\sqrt{T})$, and $C_{2T}-C_T=\Omega(\sqrt{T})$. These guarantees hold without assuming a scale-invariant objective.

\begin{algorithm}[t]
\caption{MuonGS for one weight matrix (the projection is line~4).}
\label{alg:muongs}
\begin{algorithmic}[1]
\Require learning rate $\eta_t$, momentum $\beta$, initial weights $W_0$, $U_0=0$
\For{$t=0,1,2,\dots$}
  \State $U_{t+1}\gets\beta U_t+\nabla f(W_t)$ \Comment{gradient momentum}
  \State $O_t\gets\mathrm{Orthogonalize}(U_{t+1})$ \Comment{Newton--Schulz, as in Muon}
  \State $\widetilde O_t\gets O_t-\dfrac{\langle W_t, O_t\rangle}{\|W_t\|_F^2}\,W_t$ \Comment{remove the component along $W_t$}
  \State $W_{t+1}\gets W_t-\eta_t\widetilde O_t$
\EndFor
\end{algorithmic}
\end{algorithm}

\subsection{Norm growth and accumulation}
We now quantify how the projection's control of norm growth sustains accumulation over training. Starting from $r_0>0$ and using the bound $\|O_t\|_F\le\sqrt{p}$, the following theorem controls weight-norm growth under arbitrary learning-rate schedules. For a constant learning rate, it then gives lower bounds on both $C_T$ and $C_{2T}-C_T$, showing that accumulation is sustained during later training as well as over the full trajectory.

\begin{theorem}[MuonGS growth and accumulation]\label{thm:gs}
Under \eqref{eq:gs}, all iterates remain nonzero and
\begin{equation}\label{eq:gs_radius}
r_T^2
=r_0^2+\sum_{t=0}^{T-1}\eta_t^2\norm{\widetilde O_t}^{\,2}
\le r_0^2+p\sum_{t=0}^{T-1}\eta_t^2.
\end{equation}
For $\eta_t=\eta>0$,
\begin{align}
 C_T&\ge\frac{2}{p\eta}
 \left(\sqrt{r_0^2+p\eta^2T}-r_0\right)
 =\Omega(\sqrt T),\label{eq:gsacc}\\
 C_{2T}-C_T&\ge\frac{\eta T}{\sqrt{r_0^2+2p\eta^2T}}
 =\Omega(\sqrt T).\label{eq:gswindow}
\end{align}
\end{theorem}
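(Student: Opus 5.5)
The plan is to derive everything from the exact norm identity \eqref{eq:gsidentity}, argued inductively so that the projection in \eqref{eq:gs} is well defined at every step, and then to convert the radius bound into accumulation bounds by elementary sum–integral comparisons.

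\textbf{Norm identity and nonzero iterates.} I would argue by induction on $t$ that $r_t>0$. Given $r_t>0$, $\widetilde O_t$ is well defined and satisfies $\ip{W_t}{\widetilde O_t}=0$. Expanding $\norm{W_t-\eta_t\widetilde O_t}^2$ as in \eqref{eq:rawnorm} then gives
\[
r_{t+1}^2=r_t^2+\eta_t^2\norm{\widetilde O_t}^{\,2}\ge r_t^2>0.
\]
This closes the induction and shows $r_t\ge r_0>0$ for all $t$. Telescoping from $0$ to $T-1$ yields the equality in \eqref{eq:gs_radius}. For the inequality, $\widetilde O_t$ is the orthogonal projection of $O_t$ onto the complement of $\operatorname{span}\{W_t\}$. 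By Pythagoras, $\norm{\widetilde O_t}\le\norm{O_t}\le\sqrt p$ using \eqref{eq:muon-bound}, which gives $r_T^2\le r_0^2+p\sum_{t<T}\eta_t^2$.

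\textbf{Full-horizon accumulation \eqref{eq:gsacc}.} With $\eta_t=\eta$, the radius bound reads $r_t\le g(t):=\sqrt{r_0^2+p\eta^2t}$, so $C_T\ge\sum_{t=0}^{T-1}\eta/g(t)$. Since $x\mapsto\eta/g(x)$ is decreasing on $[0,\infty)$, each term dominates the integral over $[t,t+1]$. Hence
\[
C_T\ge\int_0^T\frac{\eta\,dx}{\sqrt{r_0^2+p\eta^2x}}=\frac{2}{p\eta}\Bigl(\sqrt{r_0^2+p\eta^2T}-r_0\Bigr).
\]
For large $T$ the right side behaves like $2\sqrt{T}/\sqrt p$, which gives $\Omega(\sqrt T)$.

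\textbf{Later window \eqref{eq:gswindow}.} For $T\le t\le 2T-1$, monotonicity of $g$ gives
\[
r_t\le g(2T-1)\le\sqrt{r_0^2+2p\eta^2T}.
\]
Therefore each of the $T$ terms of $C_{2T}-C_T$ is at least $\eta/\sqrt{r_0^2+2p\eta^2T}$, and summing yields the bound. Its growth is $\sqrt T/\sqrt{2p}$, which is again $\Omega(\sqrt T)$.

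I expect no serious obstacle. The only point needing care is the first step: the induction must establish nonzero iterates before the projection is used, and the bound $\norm{\widetilde O_t}\le\sqrt p$ must be justified through nonexpansiveness of the projection rather than assumed. The rest is a routine integral comparison.
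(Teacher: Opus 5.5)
Your proposal is correct and follows essentially the same route as the paper: the orthogonality identity $r_{t+1}^2=r_t^2+\eta_t^2\|\widetilde O_t\|_F^2$, which also keeps the iterates nonzero; the bound $\|\widetilde O_t\|_F\le\|O_t\|_F\le\sqrt p$; a sum--integral comparison for $C_T$; and the uniform bound $r_t\le\sqrt{r_0^2+2p\eta^2T}$ on the window $T\le t<2T$. Your explicit induction, which establishes that the projection is well defined at every step, is just a more careful write-up of a step the paper handles in one line.
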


MuonGS therefore limits the norm to $O(\sqrt t)$ and guarantees at least square-root accumulation, including over later windows. It improves the worst-case guarantee of vanilla Muon without enforcing a fixed radius.
Thus the accumulation preserved by MuonGS is available during later training, rather than coming only from early updates. Table \ref{tab:orders} summarizes the comparison with vanilla Muon and bounded-norm controls.

\begin{table}[htbp]
\centering\small
\caption{Norm and accumulation guarantees at a constant positive learning rate.}\label{tab:orders}
\begin{tabular}{lccc}
\toprule
Control & Norm bound & $C_T$ & $C_{2T}-C_T$\\
\midrule
Vanilla Muon & $O(t)$ & $\Omega(\log T)$ & $\Omega(1)$\\
Weight decay & $O(1)$ & $\Omega(T)$ & $\Omega(T)$\\
Fixed norm & $R$ & $\eta T/R$ & $\eta T/R$\\
GS & $O(\sqrt t)$ & $\Omega(\sqrt T)$ & $\Omega(\sqrt T)$\\
\bottomrule
\end{tabular}
\end{table}

The preceding bounds describe the relative updating available along a trajectory. To connect this accumulation to optimization, we now consider a fixed scale-invariant matrix objective
\begin{equation}
 F(W)=f(W/\|W\|_F),\qquad O_t=\operatorname{Orthogonalize}(\nabla F(W_t)),
 \label{eq:idealized}
\end{equation}
where $f$ is twice continuously differentiable on the unit Frobenius sphere.

\begin{proposition}[Stationarity for exact current-gradient MuonGS]
\label{prop:stationarity}
Under~\eqref{eq:idealized}, MuonGS with $r_0>0$ and a constant learning rate $\eta>0$ satisfies, for every even $T\ge2$,
\begin{equation}
 \min_{T/2\le t<T}\|\operatorname{grad}f(W_t/r_t)\|_F\le\frac{K}{\sqrt T},
 \label{eq:stationarity}
\end{equation}
where $\operatorname{grad}f$ is the sphere gradient and $K<\infty$ is independent of $T$.
\end{proposition}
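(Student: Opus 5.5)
The plan is a Riemannian descent-lemma argument for $f$ on the unit sphere, with step sizes $\eta/r_t$, summed over the window $[T/2,T)$. The window sum $\sum\eta/r_t$ is bounded below by Theorem~\ref{thm:gs}, and the second-order terms are controlled by the exact norm identity~\eqref{eq:gsidentity}.

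\textbf{Step 1 (structure of the update).} Write $u_t=W_t/r_t$ and $G_t=\operatorname{grad}f(u_t)$, which is tangent at $u_t$, so $\langle G_t,W_t\rangle_F=0$. By scale invariance, $\nabla F(W_t)=r_t^{-1}G_t$. Orthogonalization ignores positive scaling, so $O_t$ is the polar factor of $G_t$. Hence $\langle G_t,O_t\rangle_F=\|G_t\|_*\ge\|G_t\|_F$. Because $G_t\perp W_t$, the Gram--Schmidt correction does not change this pairing, so
\[
\langle G_t,\widetilde O_t\rangle_F=\|G_t\|_*\ge\|G_t\|_F .
\]
In particular, the projection preserves the descent property. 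Moreover $u_{t+1}=(u_t-\xi_t)/\|u_t-\xi_t\|_F$ with tangent step $\xi_t=(\eta/r_t)\widetilde O_t$, i.e.\ the normalization retraction.

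\textbf{Step 2 (one-step descent).} Since $f$ is $C^2$ on the compact sphere, the standard retraction-smoothness bound holds: there is $L_f$ with
\[
f(u_{t+1})\le f(u_t)-\tfrac{\eta}{r_t}\|G_t\|_F+\tfrac{L_f}{2}\tfrac{\eta^2}{r_t^2}\|\widetilde O_t\|_F^2 .
\]
Summing over $T/2\le t<T$ and using $f\ge f_*:=\min_{\text{sphere}}f$ gives
\[
\min_{T/2\le t<T}\|G_t\|_F\sum_{t=T/2}^{T-1}\frac{\eta}{r_t}\le f(u_0)-f_*+\frac{L_f}{2}\sum_{t=T/2}^{T-1}\frac{r_{t+1}^2-r_t^2}{r_t^2}.
\]
Here the last sum was rewritten using~\eqref{eq:gsidentity}. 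By Theorem~\ref{thm:gs}, the left-hand sum is at least $(\eta T/2)/\sqrt{r_0^2+p\eta^2T}=\Omega(\sqrt T)$.

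\textbf{Step 3 (the quadratic sum; main obstacle).} Each ratio $(r_{t+1}^2-r_t^2)/r_t^2$ is at most $p\eta^2/r_0^2$. So $x\le c\log(1+x)$ with $c$ depending only on $p\eta^2/r_0^2$, and the sum is at most $c\log(r_T^2/r_{T/2}^2)$. This is $O(1)$ as soon as $r_{T/2}^2\gtrsim T$, because $r_T^2\le r_{T/2}^2+p\eta^2T/2$. That case yields~\eqref{eq:stationarity}.

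The hard case is when $r_{T/2}$ stays small, where the crude bound only gives $O(\log T)$. I would first handle it through the size of $\|\widetilde O_t\|_F$. Since $\|O_t\|_F^2=\operatorname{rank}G_t$ and the removed radial part has squared norm at most $1$, we get $\|\widetilde O_t\|_F^2\ge\operatorname{rank}G_t-1$. Whenever $\operatorname{rank}G_t\ge2$, $r_t^2$ therefore increases by at least $\eta^2$ per step. This suggests the following split of the steps in $[0,T/2)$.
\begin{itemize}
\item If at least a constant fraction of them have $\|\widetilde O_t\|_F^2\ge\kappa$, then $r_{T/2}^2\gtrsim T$ and the good case applies.
\item Otherwise many steps have nearly rank-one $G_t$ nearly aligned with the radial direction, so that $\|\widetilde O_t\|_F$ is small. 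At those steps the quadratic penalty is itself small relative to the linear gain $(\eta/r_t)\|G_t\|_F$.
\end{itemize}
To make the second branch rigorous, I would keep the factor $\|\widetilde O_t\|_F^2$ rather than bounding it by $p$, and compare it with $\|G_t\|_*$ directly. If this comparison fails, the fallback is to accept $K$ depending on $r_0,\eta,p,L_f$ through a more careful telescoping of $\log r_t^2$ over dyadic sub-windows, and to choose the sub-window with the smallest log-growth.
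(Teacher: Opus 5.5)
Your overall route is the same as the paper's: a descent inequality on the sphere with step $\eta/r_t$, summed over $[T/2,T)$, with the window gain bounded via Theorem~\ref{thm:gs}. Steps 1--2 and your ``good case'' are fine, apart from one slip: the telescoped sum over $[T/2,T)$ is bounded by $\max f-\min f$, not by $f(u_0)-f_*$. The proof is nonetheless incomplete, because the hard case $r_{T/2}^2=o(T)$ is left open. The missing idea is a one-line consequence of your own Step 1:
\[
\|G_t\|_F\le\|G_t\|_*=\langle G_t,\widetilde O_t\rangle_F\le\|G_t\|_F\,\|\widetilde O_t\|_F .
\]
So every step with $G_t\neq0$ has $\|\widetilde O_t\|_F\ge1$, and \eqref{eq:gsidentity} gives $r_0^2+\eta^2t\le r_t^2\le r_0^2+p\eta^2t$. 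If some $G_t=0$ with $t<T$, the iterates freeze and the minimum is $0$. Hence $r_{T/2}^2\ge r_0^2+\eta^2T/2$ always, the hard case never occurs, and your log bound gives $\log(r_T^2/r_{T/2}^2)\le\log(1+p)$. The paper uses the same lower bound more directly, via $\sum_{t=T/2}^{T-1}\eta^2/r_t^2\le\sum_{t=T/2}^{T-1}1/t\le1$.

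The hard-case analysis you sketch rests on incorrect claims. A nearly rank-one $G_t$ ``nearly aligned with the radial direction'' cannot occur. $G_t$ is tangent, and if it has rank one then $O_t$ is a positive multiple of $G_t$, so $\widetilde O_t=O_t$ has norm exactly $1$. Your inequality $\|\widetilde O_t\|_F^2\ge\operatorname{rank}G_t-1$, which came from ``the radial part has squared norm at most $1$'', is false for rank $\ge3$. Take $G=\mathrm{diag}(1,\epsilon,\epsilon)$, $O=I_3$, and $\bar W$ the normalized projection of $I_3$ onto $G^\perp$. Then $\|\widetilde O\|_F^2=(\operatorname{tr}G)^2/\|G\|_F^2\to1$, while the removed radial part has squared norm tending to $2$. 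The correct uniform bound is $\|\widetilde O_t\|_F^2\ge\|G_t\|_*^2/\|G_t\|_F^2\ge1$, which is exactly what closes the argument. Finally, the dyadic fallback is not carried out. Selecting the sub-window with the smallest log-growth does not by itself control the ratio of penalty to gain. You do not show it yields $K/\sqrt T$ rather than a bound with extra logarithmic factors.
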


Here the current-gradient direction provides descent alignment, while GS controls the weight norm and the resulting effective learning rates. Together they guarantee an iterate with a small sphere-gradient norm in the second half of training, even though the weight norm can keep growing.

\section{Experiments}

In this section we evaluate the \ourname{} optimizer on the task of LLM pre-training. Since the results from \cite{wen2026fantastic} indicate that the MuonH outperforms MuonW in typical pre-training settings, we focus our main comparison to MuonH. Note that both MuonGS and MuonH share the favorable property of not needing to tune a weight decay coefficient as opposed to MuonW. We include some additional results comparing with Muon and MuonW in Appendix \ref{app:muon_results}. Below we present our main results using dense GPT-style models, but include preliminary results on MoE model training in Section \ref{sec:moe}.

\subsection{Experimental settings}

\textbf{Architecture.} We use the modernized GPT-style architecture from the Modded-NanoGPT Optimization Benchmark \citep{moddednanogpt_optimbench_2026} at two scales, approximately 160M and 400M parameters. The smaller model, which corresponds to the NanoGPT model used in \cite{moddednanogpt_optimbench_2026}, has 12 layers, hidden width 768, and 6 attention heads; the other is a scaled-up version and has 24 layers, hidden width 1024, and 8 heads, with head dimension 128 for both sizes. Both models use RMSNorm, rotary positional embeddings, QK-Norm, squared-ReLU MLPs, and untied input and output embeddings. 

\textbf{Pre-training setup.} We pre-train on the 100B-token subset of the FineWeb dataset \citep{penedo2024finewebdatasetsdecantingweb} using a sequence length of 1024 and a global batch of 512 sequences. Our experiments span approximately 3--25B training tokens for the 160M model and 8--63B for the 400M model, corresponding to roughly 1--8 Chinchilla budgets \citep{hoffmann2022training} at both scales. We use BF16 mixed-precision training.

\textbf{Optimizers.} We compare \ourname{} as described in Algorithm \ref{alg:muongs} with MuonH and Muon/MuonW. All methods use Nesterov momentum $0.95$ for hidden matrix weights. AdamW is used for token embeddings, output-head weights, and all 1D parameters, with $(\beta_1,\beta_2)=(0.8,0.95)$, $\epsilon=10^{-10}$, weight decay $10^{-3}$, and respective max learning rates $0.7$, $0.004$, and $0.015$. Matrix and auxiliary learning rates decay linearly to zero without warmup after constant phases covering $15\%$ and $30\%$ of training, respectively.

\textbf{Tuning and metrics.}  We tune learning rates on a  multiplicative grid with ratio $\sqrt{2}$ similar to \cite{wen2026fantastic}, separately for all optimizers, model sizes and training durations. For MuonW we additionally tune the weight decay as shown in Appendix \ref{app:muon_results}. We use as metric the cross-entropy loss $\mathcal{L}_{\mathrm{val}}$ over about 10M tokens from the FineWeb validation set, equivalently expressed as perplexity $\mathrm{PPL}=\exp(\mathcal{L}_{\mathrm{val}})$.

\subsection{Benchmarking Results}

\textbf{Learning Rate Sweeps.} From Figure \ref{fig:main_sweep} we observe that, for extended training horizons (overtraining), \ourname{} achieves lower perplexity than MuonH. We note that in modern LLM pre-training extending to beyond one Chinchilla training tokens, i.e.,. more than 20 tokens per model parameter, is common practice \citep{grattafiori2024llama, yang2025qwen3}. Therefore, \ourname's favorable scaling with training horizon is particularly relevant to modern LLM pre-training.

\begin{figure}
    \centering
    \includegraphics[width=1.0\textwidth]{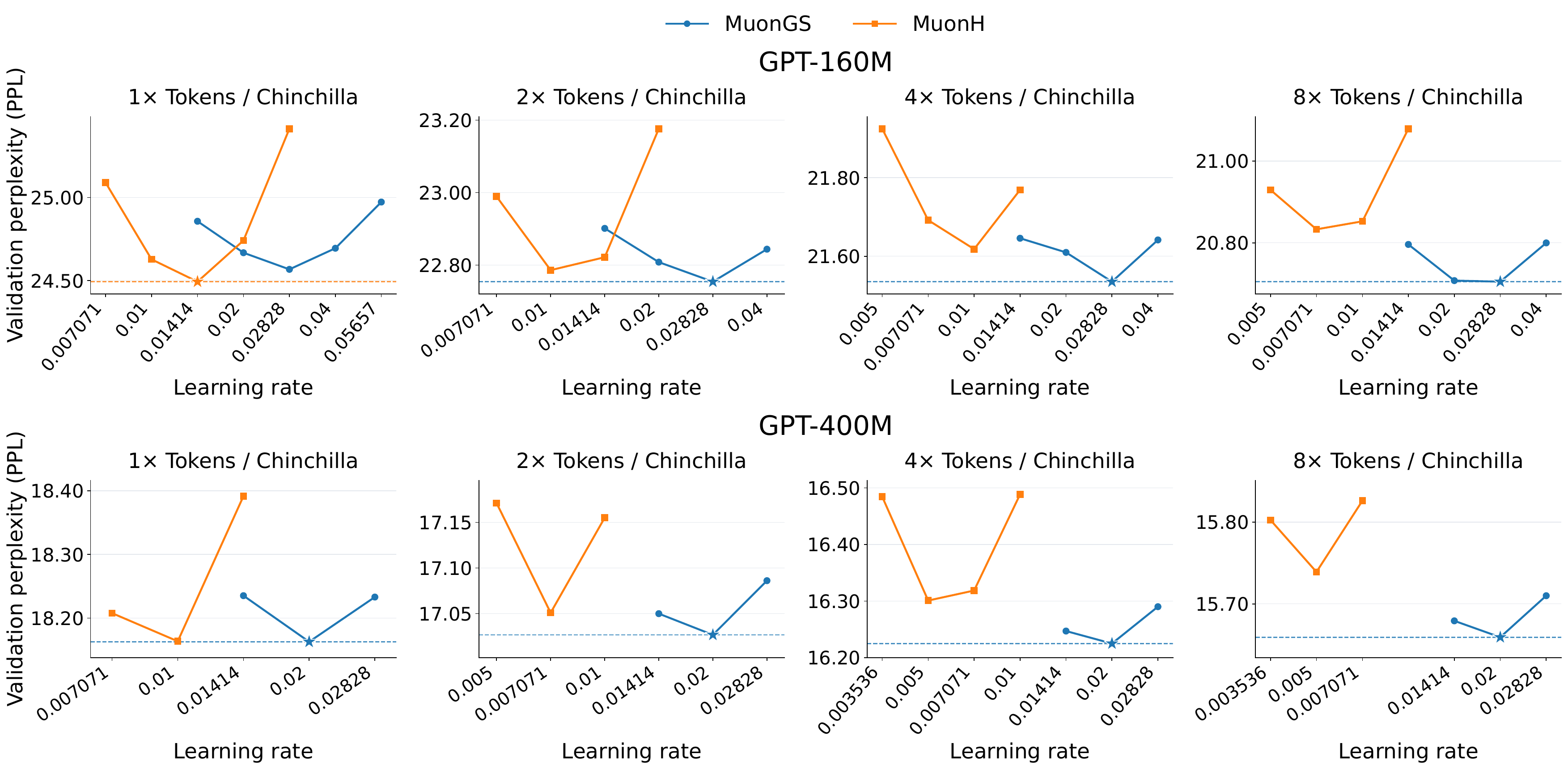}
    \caption{Validation perplexity versus learning rate for MuonGS and MuonH across the two model sizes and Chinchilla ration $1{\times}$--$8{\times}$. Stars and dashed horizontal lines indicate the best observed perplexity in each panel.}
    \label{fig:main_sweep}
\end{figure}

\textbf{Speedup Estimation.} Using the best results for each algorithm from the sweeps above, we estimate the training-budget speedup of \ourname{} over MuonH following the method used in \cite{wen2025fantastic}. Specifically, for each model size $N$, we fit MuonH's validation loss as a function of training-token budget $D$ by nonlinear least squares:
\[
\widehat{L}_{N}(D)=\alpha_N D^{-B_N}+\beta_N.
\]
For a \ourname{} run achieving loss $L_{\mathrm{GS}}$ at budget $D_{\mathrm{GS}}$, we solve
$\widehat{L}_{N}(D_{\mathrm{H}})=L_{\mathrm{GS}}$ and report $D_{\mathrm{H}}/D_{\mathrm{GS}}$ as the estimated speedup, where $D_{\mathrm{H}}$ is the estimated training-token budget MuonH would require to match the validation loss achieved by \ourname{}. 

From Figure \ref{fig:muongs-speedup} we observe that \ourname{} achieves a token-equivalent speedup over MuonH up to about 7-15\% for longer training horizons.

\begin{figure}
    \centering
    \includegraphics[width=\textwidth]{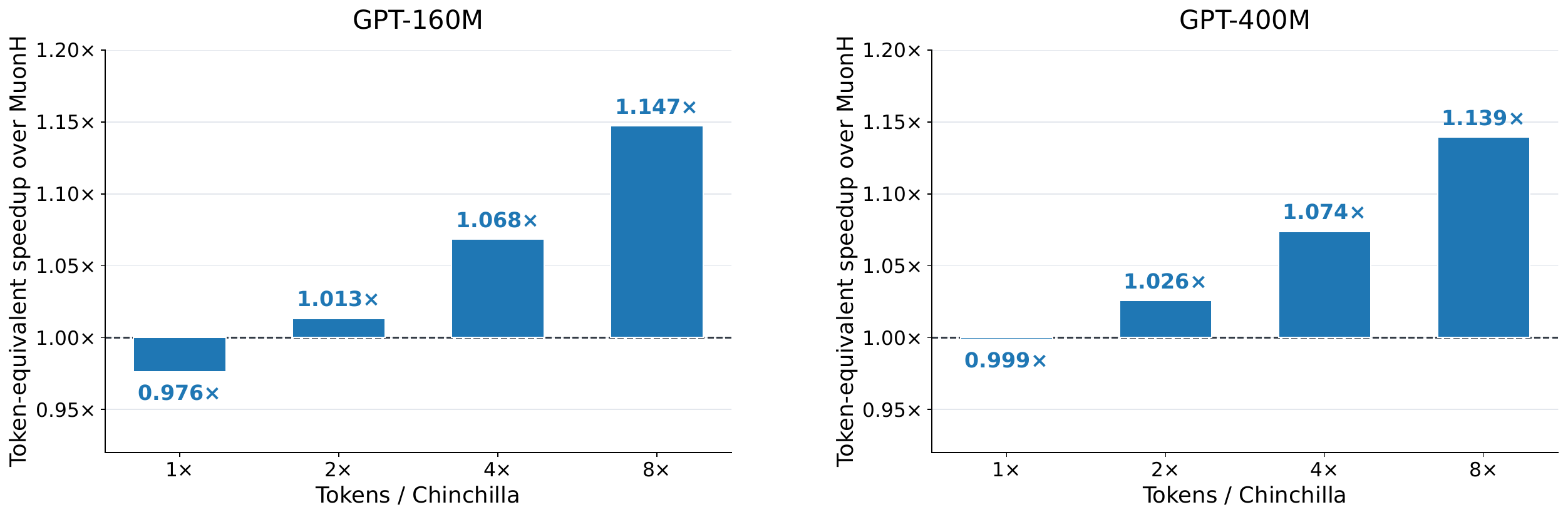}
    \caption{Estimated token-equivalent speedup of MuonGS over MuonH for GPT-160M (left) and GPT-400M (right) across Chinclilla ratios $1{\times}$--$8{\times}$.}
    \label{fig:muongs-speedup}
\end{figure}

\subsection{MoE experiments} 
\label{sec:moe}

\begin{figure}
    \centering
    \includegraphics[width=\textwidth]{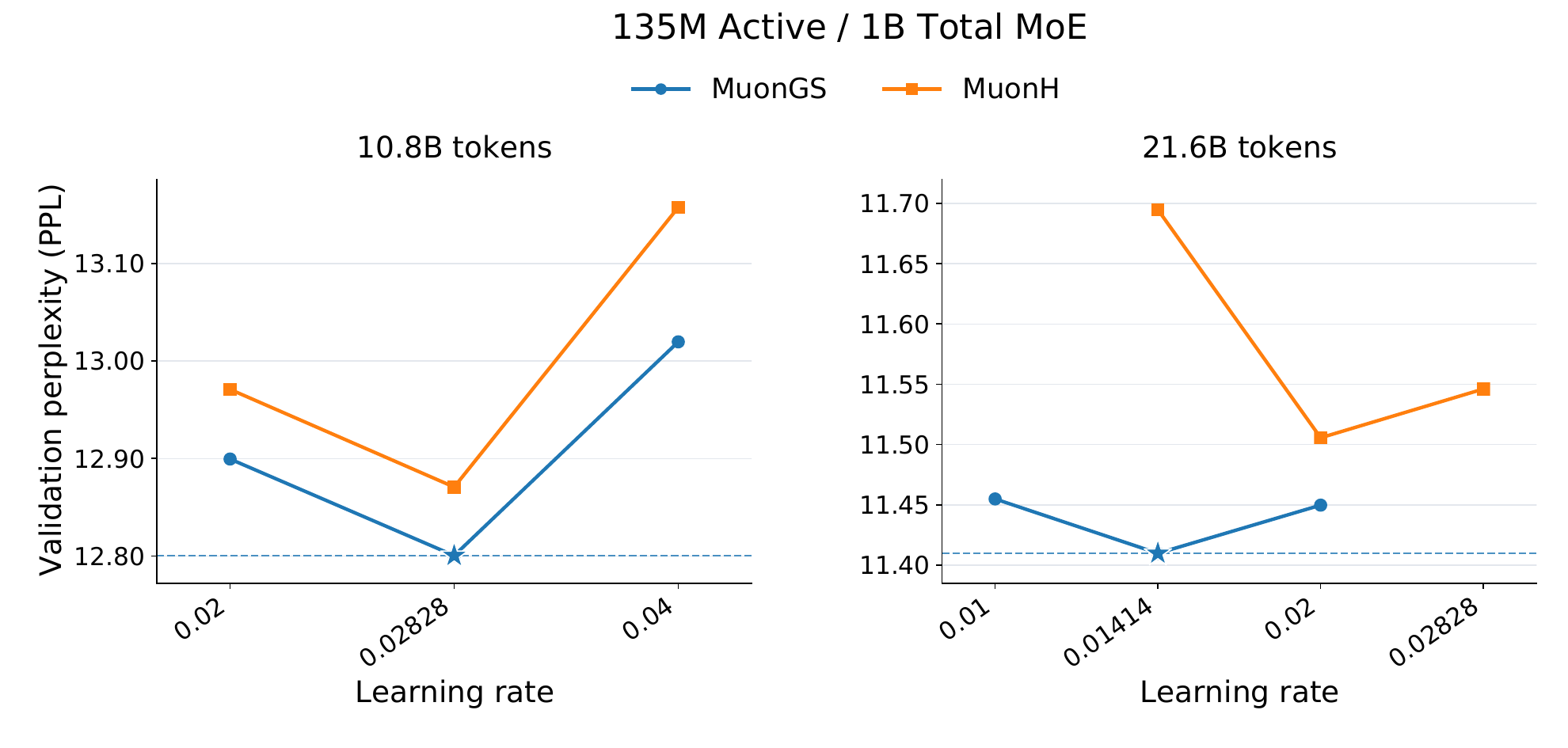}
    \caption{Validation perplexity versus learning rate for \ourname{} and MuonH for an 135M active / 1B total MoE model for two different training durations. Stars and dashed horizontal lines indicate the best observed perplexity in each panel.}
    \label{fig:moe-ppl}
\end{figure}

\vspace{-10pt}

In this section we include preliminary results comparing \ourname{} to MuonH on a modern Mixture-of-Experts (MoE) architecture. The model incorporates grouped-query attention \citep{ainslie2023gqa}, gated attention \citep{qiu2026gated} and attention residuals (AttnRes) \citep{team2026attention}. It has 32 layers, hidden width 512, and 64 routed experts with top-7 routing plus one shared expert per layer, yielding approximately 135M active non-embedding parameters and 1B total parameters. We train on Ultra-FineWeb \citep{wang2025ultra} for 10.8B and 21.6B tokens (i.e., 80 and 160 times the active parameters), using a batch size of 1,024 sequences of length 4,096. Learning rates are tuned on a $\sqrt{2}$-spaced grid, with 2\% warmup followed by linear decay to zero. Both \ourname{} and MuonH use momentum $0.95$ with Nesterov acceleration. 

From Figure \ref{fig:moe-ppl}, we observe that \ourname{} achieves lower best validation perplexity than MuonH at both training budgets, making it a promising method for modern MoE optimization. 

\section{Conclusion}
 
We reexamined the role of weight norm control for Muon over the entire training trajectory instead of a single step. In particular, we use the accumulated effective learning rate to measure how far the weights can move relative to their norm over any training interval, and shows that continued exploration requires it to keep growing. We further show that vanilla Muon can only guarantee the accumulation to grow logarithmically, while weight norm control approaches such as weight decay and fixed-norm control guarantee a linear accumulation growth, which explains the role of weight norm control. Based on this finding, we relax the weight norm control to a larger range of approaches. In particular, we propose MuonGS by simply removing the radial component of the Muon update. Theoretically, we show that MuonGS guarantees an accumulation of at least $\sqrt{T}$, sufficient for continued exploration. Numerically, MuonGS outperforms MuonW and MuonH at long training horizons, providing a speedup of up to $15\%$ at matched validation loss. Our accumulation guarantees are worst-case bounds, and the accumulation guarantee provides a necessary but not sufficient condition for continued exploration. Which rate of norm growth works best for a given training budget remains an open question.


\newpage

\bibliographystyle{plainnat}
\bibliography{reference.bib}


\newpage

\appendix

\section{Additional Experiments}\label{app:experiments}

\subsection{Experiments with Muon/MuonW}
\label{app:muon_results}

In Figure \ref{fig:muon_comparison_20k} we include a comparison of MuonH and the proposed \ourname{} over Muon and MuonW (with varying weight decay). We observe that Muon, not using any type of weight norm control, performs noticeably worse than the other baselines. \ourname{} obtains the best perplexity, followed by MuonH and MuonW.

\begin{figure}[H]
    \centering
    \includegraphics[width=\linewidth]{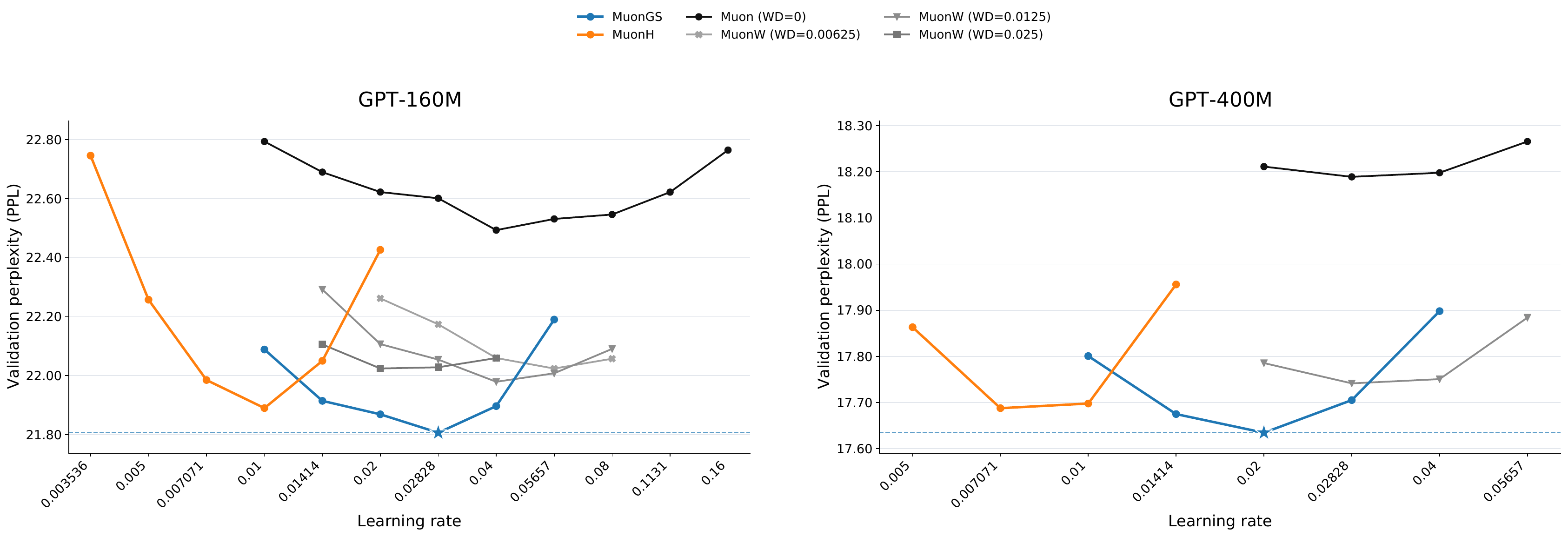}
    \caption{Validation perplexity vs learning rate for \ourname{}, MuonH, Muon and MuonW with varying weight decay. Both models are trained for 20k steps (about 10.5B tokens) on FineWeb.}
    \label{fig:muon_comparison_20k}
\end{figure}

We also compare the evolution of the norm of the combined matrix weights, $\|w_t\|_2$, during training. As shown in Figure \ref{fig:weight_norms}, MuonH maintains a constant norm, while Muon without weight decay exhibits considerably larger norm growth. MuonW's norm initially increases and subsequently declines, whereas MuonGS exhibits more moderate, gradual norm growth without weight decay or an explicit fixed-norm constraint.
   
\begin{figure}[H]
    \centering
    \includegraphics[width=\linewidth]{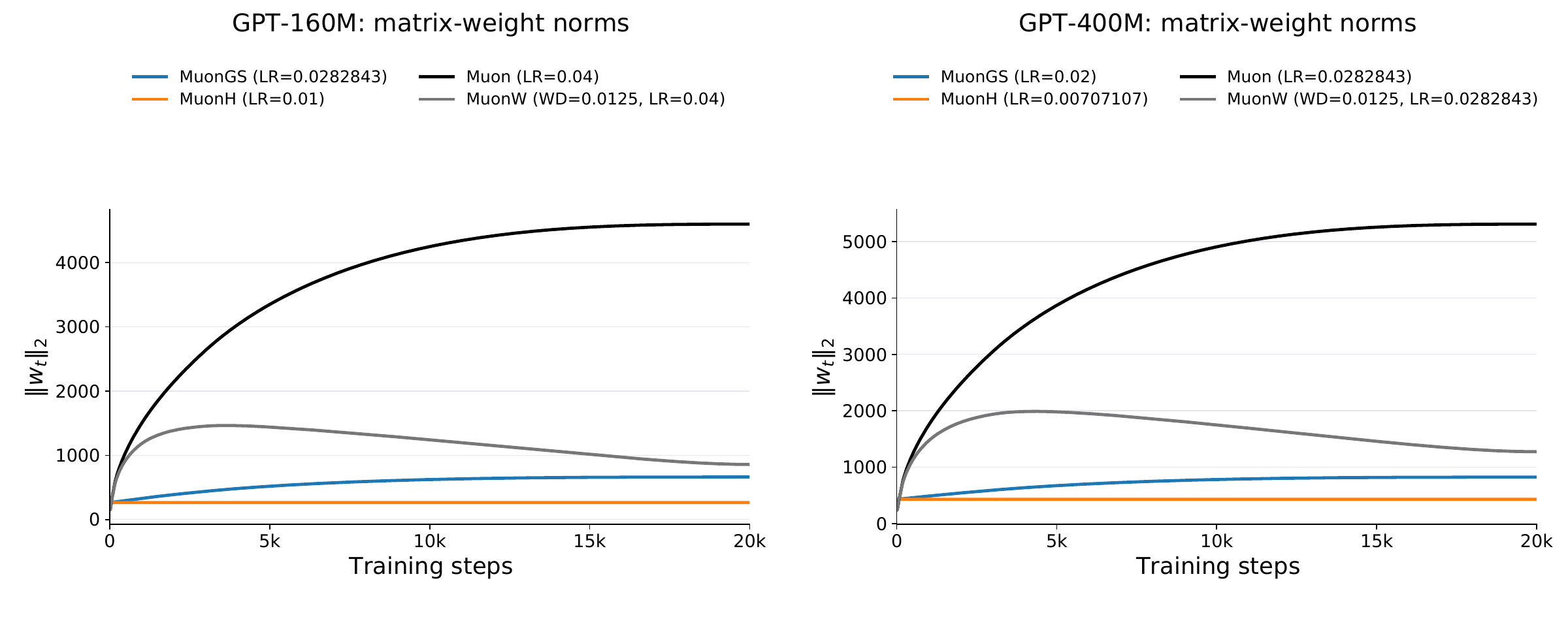}
    \caption{Weight-norm evolution for the 160M (left) and 400M (right) models over 20k training steps, using each method's best learning rate. Here, $w_t$ concatenates the matrix weights updated by Muon variants, excluding auxiliary AdamW parameters. MuonW uses weight decay $0.0125$ which was found to be the best among the ones tested as shown \ref{fig:muon_comparison_20k}}.
    \label{fig:weight_norms}
\end{figure}

\section{Proofs and Examples}\label{app:proof}

\subsection{Proof of Proposition~\ref{prop:necessary}}
\begin{proof}
For nonzero matrices $X,Y$ let
\[
d(X,Y)=\frac{\|X-Y\|_F}{\max\{\|X\|_F,\|Y\|_F\}}.
\]
We first show that $d(X,Y)\le d(X,Z)+d(Z,Y)$. Write $x=\|X\|_F$, $y=\|Y\|_F$, $z=\|Z\|_F$, and assume $x\ge y$. If $z\le x$, both denominators on the right are at most $x$, and the claim follows from the triangle inequality for $\|\cdot\|_F$. If $z>x$, use the inversion $\iota(X)=X/\|X\|_F^2$, which satisfies $\|\iota(X)-\iota(Y)\|_F=\|X-Y\|_F/(xy)$. Then
\[
d(X,Y)=y\|\iota(X)-\iota(Y)\|_F,\quad d(X,Z)=x\|\iota(X)-\iota(Z)\|_F,\quad d(Z,Y)=y\|\iota(Z)-\iota(Y)\|_F,
\]
and the claim follows from the triangle inequality for $\iota$ and $y\le x$.

Applying this inequality along the iterates and then~\eqref{eq:relative-step},
\[
\frac{\|W_T-W_s\|_F}{\max\{r_s,r_T\}}\le\sum_{t=s}^{T-1}\frac{\|W_{t+1}-W_t\|_F}{\max\{r_t,r_{t+1}\}}\le\sum_{t=s}^{T-1}\frac{\eta_t\|O_t\|_F}{r_t}\le\sqrt p\,(C_T-C_s),
\]
which is~\eqref{eq:displacement}. For~\eqref{eq:loss_change}, let $x\ge y$ again; since $x-y\le\|X-Y\|_F$,
\[
\Bigl\|\frac Xx-\frac Yy\Bigr\|_F\le\frac{\|X-Y\|_F}{x}+\|Y\|_F\Bigl(\frac1y-\frac1x\Bigr)=\frac{\|X-Y\|_F+x-y}{x}\le2\,d(X,Y).
\]
Hence $|F(W_T)-F(W_s)|\le L\|W_T/r_T-W_s/r_s\|_F\le2L\,d(W_T,W_s)$, and~\eqref{eq:displacement} gives~\eqref{eq:loss_change}.
\end{proof}

\subsection{Proof of Proposition~\ref{prop:controls}}\label{app:controls}
\begin{proof}

\emph{Vanilla Muon.} Fix integers $0\le s < u$. The triangle inequality gives $r_{t+1}\le r_t+\eta_t\sqrt p$, so at a constant learning rate $r_t\le r_0+\eta\sqrt p\,t$. The function $x\mapsto\eta/(r_0+\eta\sqrt p\,x)$ is decreasing, so its value at an integer $t$ is at least its integral over $[t,t+1]$. Summing over $s\le t<u$ gives
\[
C_u-C_s=\sum_{t=s}^{u-1}\frac{\eta}{r_t}\ge\int_s^u\frac{\eta\,dx}{r_0+\eta\sqrt p\,x}=\frac{1}{\sqrt p}\log\frac{r_0+\eta\sqrt p\,u}{r_0+\eta\sqrt p\,s}.
\]
Taking $(s,u)=(0,T)$ and $(s,u)=(T,2T)$ give (i).

\emph{MuonW.} For $0\le\eta_t\lambda\le1$, $r_{t+1}\le(1-\eta_t\lambda)r_t+\eta_t\sqrt p$. Since $\sqrt p\le\lambda R_\lambda$, the bound $r_t\le R_\lambda$ implies $r_{t+1}\le R_\lambda-\eta_t(\lambda R_\lambda-\sqrt p)\le R_\lambda$. Therefore by induction $r_t\le R_\lambda$ for all $t$, so
\[
C_u-C_s=\sum_{t=s}^{u-1}\frac{\eta_t}{r_t}\ge\frac{1}{R_\lambda}\sum_{t=s}^{u-1}\eta_t.
\]
At a constant learning rate, the steps $0\le t<T$ and $T\le t<2T$ each contribute $\eta T$ to the last sum, which gives (ii).

\emph{Fixed norm.} Renormalization keeps $r_t=R$ as long as the weight before renormalization is nonzero, so $C_u-C_s=R^{-1}\sum_{t=s}^{u-1}\eta_t$, which gives (iii) in the same way.

The tightness of (i) is shown by the example of Appendix~\ref{app:example}.
\end{proof}

\subsection{An exact Muon example}\label{app:example}

Let $F(W)=\exp(\tr W)$ on $\mathbb{R}^{p\times p}$ and $W_0=-bI_p$ with $b>0$. The gradient $\nabla F(W)=\exp(\tr W)I_p$ is a positive multiple of $I_p$, at every $W$, and so is the momentum (also with Nesterov's variant). Exact orthogonalization therefore gives $O_t=I_p$ with $\|O_t\|_F=\sqrt{p}$ at every step. For every method below, we choose a constant nominal learning rate $\eta_t=\eta>0$.

\paragraph{Vanilla Muon.} By induction,
\begin{equation}\label{eq:exraw}
W_t=-(b+\eta t)I_p,\qquad r_t=\sqrt p\,(b+\eta t),\qquad\frac{\eta}{r_t}=\frac{1}{\sqrt p}\,\frac{\eta}{b+\eta t}.
\end{equation}
The applied update $-\eta O_t$ is a positive multiple of $W_t$, so every step points directly away from the origin. In particular, $-\langle W_t,O_t\rangle_F/r_t=\sqrt p$, and the norm bound $r_{t+1}\le r_t+\eta\sqrt p$ in Appendix~\ref{app:controls} holds with equality. 

We now calculate $C_u-C_s$:
\[
C_u-C_s
=\sum_{t=s}^{u-1}\frac{\eta}{r_t}
=\frac{1}{\sqrt p}\sum_{t=s}^{u-1}\frac{\eta}{b+\eta t}.
\]
Since the function $x\mapsto 1/(b+x)$ is decreasing, we have
\[
0\le
\frac{\eta}{b+\eta t}
-\int_{\eta t}^{\eta(t+1)}\frac{dx}{b+x}
\le
\eta\left(\frac{1}{b+\eta t}
-\frac{1}{b+\eta(t+1)}\right).
\]
Summing over $s\le t<u$ and dividing by $\sqrt p$ gives
\begin{equation}\label{eq:excompare}
0\le C_u-C_s-\frac{1}{\sqrt p}\log\frac{b+\eta u}{b+\eta s}
\le\frac{\eta}{\sqrt p}
\left(\frac{1}{b+\eta s}-\frac{1}{b+\eta u}\right)
\le\frac{\eta}{b\sqrt p}.
\end{equation}
Since $r_0=b\sqrt p$, the logarithms for $(s,u)=(0,T)$ and $(T,2T)$ are exactly the two bounds of Proposition~\ref{prop:controls}(i). So $C_T$ exceeds the first bound by at most $\eta/(b\sqrt p)=C_1$, $C_{2T}-C_T$ exceeds the second by at most $\eta/(\sqrt p\,(b+\eta T))\to0$, and, as $T\to\infty$,
\begin{equation}\label{eq:exconst}
C_T=\frac{\log T}{\sqrt p}+O(1),\qquad C_{2T}-C_T\longrightarrow\frac{\log2}{\sqrt p}.
\end{equation}

\paragraph{MuonW and fixed norm.} With weight decay and $q=1-\eta\lambda\in(0,1)$, the update $W_{t+1}=qW_t-\eta I_p$ gives
\begin{equation}\label{eq:exwd}
W_t=-b_tI_p,\qquad b_t=\frac1\lambda+\Bigl(b-\frac1\lambda\Bigr)q^t,\qquad r_t=\sqrt p\,b_t.
\end{equation}
Since $b_t\to1/\lambda$ geometrically, $\eta/r_t=\eta\lambda/\sqrt p+O(q^t)$. Summing gives $C_T=\eta\lambda T/\sqrt p+O(1)$ and $C_{2T}-C_T=\eta\lambda T/\sqrt p+O(q^T)$.

With a fixed norm $R=r_0$, the weight before renormalization is $-(b+\eta)I_p$ for $D_t=O_t$, and $-b(1+\eta)I_p$ for MuonH, whose rescaled direction is $D_t=RO_t/\|O_t\|_F=bI_p$. Both renormalize to $W_0$, so $W_t=W_0$ and $C_T=C_{2T}-C_T=\eta T/r_0$.

\paragraph{Numerical illustration in Figure~\ref{fig:toy_accumulation}.} In Figure~\ref{fig:toy_accumulation}, we choose $p=2$, $b=4$, $\eta=0.04$, $\lambda=0.2$, momentum coefficient $\beta=0.95$ and $10^5$ steps. The corresponding weight norms are thus
\begin{equation}\label{eq:exnorms}
r_t^{\mathrm{vanilla}}=\sqrt2\,(4+0.04t),\qquad r_t^{\mathrm W}=\sqrt2\,(5-0.992^t),\qquad r_t^{\mathrm H}=4\sqrt2.
\end{equation}

\subsection{Proof of Theorem~\ref{thm:gs}}\label{app:sec34gsgrowth}
\begin{proof}
The projection gives $\langle W_t,\widetilde O_t\rangle_F=0$ and $\|\widetilde O_t\|_F^2=\|O_t\|_F^2-\langle W_t,O_t\rangle_F^2/r_t^2\le p$. Hence
\[
r_{t+1}^2=\|W_t-\eta_t\widetilde O_t\|_F^2=r_t^2+\eta_t^2\|\widetilde O_t\|_F^2,
\]
so the iterates stay nonzero, and every subsequent projection is defined. Summing over $t$ gives
\[
r_T^2
=r_0^2+\sum_{t=0}^{T-1}\eta_t^2\norm{\widetilde O_t}^{\,2}
\le r_0^2+p\sum_{t=0}^{T-1}\eta_t^2.
\]
This proves the radius bound \eqref{eq:gs_radius}.

For $\eta_t=\eta$, \eqref{eq:gs_radius} gives $r_t\le\sqrt{r_0^2+p\eta^2t}$, which increases with $t$; comparing the sum with an integral,
\[
C_T\ge\sum_{t=0}^{T-1}\frac{\eta}{\sqrt{r_0^2+p\eta^2t}}\ge\int_0^T\frac{\eta\,dx}{\sqrt{r_0^2+p\eta^2x}}=\frac{2}{p\eta}\Bigl(\sqrt{r_0^2+p\eta^2T}-r_0\Bigr).
\]
For fixed $r_0$, $p$ and $\eta$, the right-hand side equals $2\sqrt{T/p}+O(1)$, proving \eqref{eq:gsacc}.

For $T\le t<2T$, the radius bound gives $r_t\le\sqrt{r_0^2+2p\eta^2T}$. Hence
\[
C_{2T}-C_T
=\sum_{t=T}^{2T-1}\frac{\eta}{r_t}
\ge\frac{\eta T}{\sqrt{r_0^2+2p\eta^2T}},
\]
and the right-hand side equals $\sqrt{T/(2p)}+O(1)$, which proves \eqref{eq:gswindow}.
\end{proof}

\subsection{Proof of Proposition~\ref{prop:stationarity}}
\begin{proof}
Write $\bar W_t=W_t/r_t$ and $G_t=\operatorname{grad} f(\bar W_t)$. The chain rule gives $\nabla F(W_t)=G_t/r_t$, so $O_t=\mathrm{Orthogonalize}(G_t)$. Since $G_t$ is tangent, $\langle G_t,\bar W_t\rangle_F=0$, and
\[
\langle G_t,\widetilde O_t\rangle_F=\langle G_t,O_t\rangle_F=\|G_t\|_*.
\]
Here $\|\cdot\|_*$ denotes the nuclear norm. When $G_t\ne0$, Cauchy--Schwarz gives
\[
 1\le\frac{\|G_t\|_*}{\|G_t\|_F}
 \le\|\widetilde O_t\|_F\le\|O_t\|_F\le\sqrt p.
\]
If $G_t=0$ for some $t<T$, all later iterates are stationary and~\eqref{eq:stationarity} holds. Otherwise~\eqref{eq:gs_radius} gives
\begin{equation}\label{eq:gs_radius_2}
    r_0^2+\eta^2t\le r_t^2\le r_0^2+p\eta^2t,\qquad0\le t\le T.
\end{equation}

Let $L_1$ and $L_2$ bound the sphere-gradient norm and the operator norm of the Riemannian Hessian of $f$. These bounds are finite because $f$ is $C^2$ on the compact unit sphere. At a nonstationary step, set
\[
 u=\frac{\widetilde O_t}{\|\widetilde O_t\|_F},\qquad
 x=\frac{\eta\|\widetilde O_t\|_F}{r_t},\qquad
 \theta=\arctan x.
\]
Orthogonality gives $\bar W_{t+1}=\bar W_t\cos\theta-u\sin\theta$. Taylor's theorem along this great-circle arc yields
\[
 f(\bar W_{t+1})\le f(\bar W_t)-\theta\langle G_t,u\rangle_F
                        +\frac{L_2}{2}\theta^2.
\]
Since $x-\arctan x=\int_0^x s^2/(1+s^2)\,ds\le x^2/2$,
using $\theta\le x$ and
$0\le\langle G_t,u\rangle_F\le L_1$, we obtain
\begin{equation}\label{eq:one-step}
    f(\bar W_{t+1})\le f(\bar W_t)-x\langle G_t,u\rangle_F+\frac{L_1+L_2}{2}\,x^2
 \le f(\bar W_t)-\frac{\eta}{r_t}\|G_t\|_*
                      +B\frac{\eta^2}{r_t^2}
\end{equation}
where $B=p(L_1+L_2)/2$. If a stationary iterate occurs before $T$, the claimed minimum is zero. Otherwise, the radius bounds in \eqref{eq:gs_radius_2} imply
\[
 \sum_{t=T/2}^{T-1}\frac{\eta^2}{r_t^2}\le1,
 \qquad
 \sum_{t=T/2}^{T-1}\frac{\eta}{r_t}
 \ge\frac{\eta T}{2\sqrt{r_0^2+p\eta^2T}}.
\]
Summing the one-step estimate \eqref{eq:one-step} over interval $T/2\le t < T$ and using
$\|G_t\|_F\le\|G_t\|_*$ gives
\[
 \min_{T/2\le t<T}\|G_t\|_F
 \le\frac{2\sqrt{r_0^2+p\eta^2T}}{\eta T}
       \bigl(\max f-\min f+B\bigr)
 =O(T^{-1/2}),
\]
where the extrema are over the unit sphere and all constants are independent of $T$.
\end{proof}

\section{Dependence on the Nominal Learning Rate}
\label{app:lr}
Section~\ref{sec:control} asks whether a larger nominal learning rate can make up for the small accumulation of vanilla Muon. This section shows that it cannot beyond the first step. In particular, we vary the constant learning rate $\eta$ while keeping the initialization and the norm-control parameters fixed. Note that the directions $O_t$ and the norms $r_t$ may depend on $\eta$. The first term $C_1=\eta/r_0$ is linear in $\eta$ for every method, so we compare the accumulation after the first step,
\[
C_T(\eta)-C_s(\eta)=\sum_{t=s}^{T-1}\frac{\eta}{r_t(\eta)},\qquad1\le s<T.
\]

\paragraph{Vanilla Muon.} For vanilla Muon, a larger learning rate enlarges every update but may also accelerate norm growth. The next proposition shows that the two effects offset each other when the updates keep an outward component and, with a weaker bound, whenever they have no inward component.
 
\begin{proposition}[Learning-rate compensation]\label{prop:lr}
Consider vanilla Muon with a constant nominal learning rate $\eta$, and let $1\le s<T$.
\begin{enumerate}
\item[(i)] If
\begin{equation}\label{eq:outward}
-\frac{\langle W_t,O_t\rangle_F}{r_t}\ge v>0,\qquad0\le t<T,
\end{equation}
then $r_0+\eta vt\le r_t\le r_0+\eta\sqrt p\,t$ for $0\le t\le T$. Moreover,
\begin{equation}\label{eq:lrbounds}
\sum_{t=s}^{T-1}\frac{1}{r_0/\eta+\sqrt p\,t}\le C_T-C_s\le\sum_{t=s}^{T-1}\frac{1}{r_0/\eta+vt}\le\frac1v\sum_{t=s}^{T-1}\frac1t.
\end{equation}
\item[(ii)] If $\langle W_t,O_t\rangle_F\le0$ and $\|O_t\|_F\ge \mu>0$ for $0\le t<T$, then $r_t^2\ge r_0^2+\mu^2\eta^2t$ for $0\le t\le T$. Moreover,
\begin{equation}\label{eq:lr-tangent}
C_T-C_s\le\sum_{t=s}^{T-1}\frac{\eta}{\sqrt{r_0^2+\mu^2\eta^2t}}\le\frac{1}{\mu}\sum_{t=s}^{T-1}\frac{1}{\sqrt t}.
\end{equation}
\end{enumerate}
\end{proposition}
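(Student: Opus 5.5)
The proof rests on the squared-norm identity \eqref{eq:rawnorm} for a vanilla Muon step,
\[
r_{t+1}^2=r_t^2-2\eta\langle W_t,O_t\rangle_F+\eta^2\|O_t\|_F^2 .
\]
We first derive two-sided bounds on $r_t$ under each hypothesis. We then convert them into bounds on $C_T-C_s=\sum_{t=s}^{T-1}\eta/r_t$ by monotonicity of $x\mapsto 1/x$. The lower bounds need nothing new: the triangle inequality gives $r_{t+1}\le r_t+\eta\sqrt p$, exactly as in the proof of Proposition~\ref{prop:controls}(i), hence $r_t\le r_0+\eta\sqrt p\,t$. Dividing numerator and denominator of $\eta/r_t$ by $\eta$ yields the left inequality in \eqref{eq:lrbounds}.

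\textbf{Case (i).} For the lower norm bound we cannot use the squared identity directly in its linear form, so we argue geometrically. Decompose $W_{t+1}=W_t-\eta O_t$ along the unit vector $W_t/r_t$. The radial coordinate of $W_{t+1}$ is $r_t-\eta\langle W_t,O_t\rangle_F/r_t\ge r_t+\eta v$. Since the norm of a vector is at least its component along any unit direction, this gives $r_{t+1}\ge r_t+\eta v$. Induction then gives $r_t\ge r_0+\eta v t$. Consequently
\[
\frac{\eta}{r_t}\le \frac{1}{r_0/\eta+vt}\le \frac{1}{vt}
\]
for $t\ge1$, where the last step drops $r_0/\eta>0$; this is where $s\ge1$ is needed. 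Summing over $s\le t<T$ gives the two upper bounds in \eqref{eq:lrbounds}.

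\textbf{Case (ii).} Here we use the squared identity itself. The hypothesis $\langle W_t,O_t\rangle_F\le0$ makes the middle term nonnegative, and $\|O_t\|_F\ge\mu$ gives $r_{t+1}^2\ge r_t^2+\mu^2\eta^2$. Telescoping yields $r_t^2\ge r_0^2+\mu^2\eta^2 t$. Then
\[
\frac{\eta}{r_t}\le\frac{\eta}{\sqrt{r_0^2+\mu^2\eta^2t}}\le\frac{1}{\mu\sqrt t}
\]
for $t\ge1$, again by dropping $r_0^2$. Summing gives \eqref{eq:lr-tangent}. One should also note that all iterates stay nonzero, since $r_t\ge r_0>0$ in both cases, so every $C_t$ is well defined.

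I expect no serious obstacle. The only delicate point is the lower bound $r_{t+1}\ge r_t+\eta v$ in (i). The squared identity alone gives only $r_{t+1}^2\ge r_t^2+2\eta v r_t$, which does not immediately telescope linearly. The radial-projection argument above avoids this.

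The interpretation, that the bounds are essentially independent of $\eta$, follows from the final $\eta$-free expressions $\frac1v\sum_t 1/t$ and $\frac1\mu\sum_t 1/\sqrt t$. This holds provided $v$ and $\mu$ are uniform in $\eta$, which I would state as the standing reading of the hypotheses.
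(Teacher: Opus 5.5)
Your proof is correct and follows the paper's argument essentially line for line. You use the radial projection onto $W_t/r_t$ to get $r_{t+1}\ge r_t+\eta v$, the triangle inequality for $r_{t+1}\le r_t+\eta\sqrt p$, and the squared-norm identity for case (ii), then divide $\eta$ by these bounds and sum. As a minor aside, your remark that the squared identity is insufficient in (i) is too pessimistic. Cauchy--Schwarz gives $\|O_t\|_F\ge v$, hence $r_{t+1}^2\ge r_t^2+2\eta v r_t+\eta^2v^2=(r_t+\eta v)^2$, which does telescope.
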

 
\begin{proof}
(i) Consider the direction along $W_t/r_t$. The norm $r_{t+1}$ is at least the component of $W_{t+1}=W_t-\eta O_t$ along this direction, which gives
\[
r_{t+1}\ge \left\langle \frac{W_t}{r_t}, W_t-\eta O_t\right\rangle_F =r_t-\eta\langle W_t,O_t\rangle_F/r_t\ge r_t+\eta v.
\]
Meanwhile, the triangle inequality gives $r_{t+1}\le r_t+\eta\sqrt p$. Summing these increments gives \eqref{eq:outward}. Dividing $\eta$ by these bounds and summing over $s\le t<T$ gives~\eqref{eq:lrbounds}.
 
(ii) Since $\langle W_t,O_t\rangle_F\le0$,
\[
r_{t+1}^2=r_t^2-2\eta\langle W_t,O_t\rangle_F+\eta^2\|O_t\|_F^2\ge r_t^2+\mu^2\eta^2.
\]
Summing gives the norm bound, and dividing $\eta$ by it gives~\eqref{eq:lr-tangent} in the same way.
\end{proof}
 
Proposition~\ref{prop:lr} shows why a larger learning rate cannot raise the accumulation of vanilla Muon after the first step: the norm grows in proportion to $\eta$. In (i), every update has an outward component, and $r_t\ge\eta vt$; in (ii), tangent updates enlarge the norm through the second-order term $\eta^2\|O_t\|_F^2$, and $r_t\ge \mu\eta\sqrt t$. A larger $\eta$ thus enlarges the steps and the norm by the same factor, so the effective learning rates after the first step are bounded independently of $\eta$:
\[
\frac{\eta}{r_t}\le\frac{1}{vt}\quad\text{in (i)},\qquad\frac{\eta}{r_t}\le\frac{1}{\mu\sqrt t}\quad\text{in (ii)},\qquad t\ge1.
\]
Summing them gives the last bounds in~\eqref{eq:lrbounds} and~\eqref{eq:lr-tangent}: whatever the value of $\eta$, $C_T-C_s$ stays below $v^{-1}\sum_{t=s}^{T-1}1/t$ in (i) and below $\mu^{-1}\sum_{t=s}^{T-1}t^{-1/2}$ in (ii). The middle sums in~\eqref{eq:lrbounds} and~\eqref{eq:lr-tangent} do increase with $\eta$, but they converge to these limits, so a larger $\eta$ can bring $C_T-C_s$ at most up to a level that does not depend on $\eta$. Only the first step, taken before the norm has grown, gains from a larger $\eta$, as $C_1=\eta/r_0$ grows linearly. 
 
The same holds when $\eta$ scales a fixed schedule $\eta_t=\eta g_t$ with $g_0>0$ as in practical training. In particular, with $S_t=\sum_{j<t}g_j$ and $Q_t=\sum_{j<t}g_j^2$, the effective learning rates satisfy $\eta_t/r_t\le g_t/(vS_t)$ in (i) and $\eta_t/r_t\le g_t/(\mu\sqrt{Q_t})$ in (ii) for $t\ge1$, again independent of $\eta$.
 
In contrast, norm control approaches bound the norm $r_t$ by a quantity that does not depend on $\eta$, so the accumulation grows linearly with $\eta$, exactly for a fixed norm and as a guarantee for MuonW. Specifically, at a fixed radius $R$, $C_T-C_s=\eta(T-s)/R$; and for MuonW with fixed $\lambda>0$ and $0<\eta\lambda<1$, the recursion $r_{t+1}\le(1-\eta\lambda)r_t+\eta\sqrt p$ gives $r_t\le R_\lambda=\max\{r_0,\sqrt p/\lambda\}$, and hence $C_T-C_s\ge\eta(T-s)/R_\lambda$. With a schedule, $\eta(T-s)$ becomes $\eta\sum_{t=s}^{T-1}g_t$ in both, for MuonW when $\eta\lambda g_t\le1$.
 
\section{Dependence on Scale and the Matrix Size}
\label{app:scale}

In the previous sections, the value of $C_T$ depends on the initialization scale, the matrix size and on how Muon's updates are scaled, while the orders of growth that our bounds guarantee do not. Values can therefore be compared between runs that share the initialization and the update scale, as in Figures~\ref{fig:CT_horizon} and~\ref{fig:CT_lr}, whose runs differ only in the norm control, $\eta$ and $\lambda$. Comparisons across initializations or widths need the normalization below. Below we assume $\eta_t=\eta>0$.

\paragraph{Scale and growth.}
Since $C_1=\eta/r_0$,
\begin{equation}
    \frac{C_T}{C_1}
    =\sum_{t=0}^{T-1}\frac{r_0}{r_t}.
    \label{eq:normalized-accumulation}
\end{equation}
Thus the factor $C_1$ carries the learning rate, the initial norm, and any constant that an implementation maintains throughout training. The ratio $C_T/C_1$ measures accumulation in units of the first step and depends only on the normalized norms $r_t/r_0$. Because $C_1$ does not depend on $T$, $C_T$ and $C_T/C_1$ grow at the same rate in $T$.

\paragraph{Normalized accumulation bounds.}
Let
\[
    \alpha=\frac{\eta\sqrt p}{r_0}.
\]
This quantity bounds the initial update size relative to the weight norm. By Proposition~\ref{prop:controls} and Theorem~\ref{thm:gs}, we have
\begin{align}
    \text{vanilla Muon:}\qquad
    \frac{C_T}{C_1}
    &\ge
    \frac{\log(1+\alpha T)}{\alpha},
    \label{eq:normalized-vanilla}\\
    \text{MuonGS:}\qquad
    \frac{C_T}{C_1}
    &\ge
    \frac{2}{\alpha^2}
    \left(\sqrt{1+\alpha^2T}-1\right),
    \label{eq:normalized-gs}\\
    \text{MuonW:}\qquad
    \frac{C_T}{C_1}
    &\ge
    \frac{T}{\max\{1,\alpha/(\eta\lambda)\}},
    \label{eq:normalized-wd}\\
    \text{fixed norm:}\qquad
    \frac{C_T}{C_1}
    &=T.
    \label{eq:normalized-fixed}
\end{align}
The first two follow by comparing the sum of $r_0/r_t$ with an integral, and the third from $R_\lambda/r_0=\max\{1,\alpha/(\eta\lambda)\}$. For every $\alpha>0$, the right-hand sides grow like $\log T$, $\sqrt T$, $T$ and $T$. The value of $\alpha$ changes their constants and the time at which they stop growing linearly, about $\alpha T\approx1$ for vanilla Muon and $\alpha^2T\approx1$ for MuonGS, but not their orders in $T$.

These expressions show that matching $\alpha$ gives the same normalized bounds for vanilla Muon and MuonGS across matrix sizes. MuonW may additionally require matching $\eta\lambda$.

\paragraph{Independence of matrix size for Muon.}
With the usual initialization, a matrix of shape $m_d\times n_d$ with independent $N(0,\sigma^2/n_d)$ entries has $r_{0,d}^2\approx m_d\sigma^2$, so $r_{0,d}\approx\sigma\sqrt{m_d}$, while the Muon update size is $\sqrt{p_d}=\sqrt{\min\{m_d,n_d\}}$. Therefore, 
\[
\alpha_d=\eta\sqrt{p_d}/r_{0,d}\approx \frac{\eta}{\sigma}\sqrt{\frac{p_d}{m_d}}
\]
is asymptotically independent of $d$. Consequently, at a fixed ratio of $p/m$, the normalized Muon -type bounds \eqref{eq:normalized-vanilla}-\eqref{eq:normalized-wd} are asymptotically independent of absolute matrix size at a fixed horizon.

\end{document}